\newenvironment{Proof}{\medbreak \noindent {\bf
Proof:~}}{\unskip\nobreak\hfill\hskip 2em \bull \par\medbreak}
\def\bull{\vrule height .9ex width .8ex depth -.1ex }
\numberwithin{equation}{section}
\newcommand{\E}{\ensuremath{\mathbb{E}}}
\newcommand{\abs}[1]{\left| #1 \right|}
\newcommand{\bt}{\tilde{b}}
\newtheorem{Def}[theorem]{Definition}
\newtheorem{observation}[theorem]{Observation}
\newcommand{\R}{\ensuremath{\mathbb{R}}}
\begin{document} 

\title{Optimal amortized regret in every interval}
\titlerunning{Optimal amortized regret in every interval}  

\parindent 1em

\author{Rina Panigrahy\inst{1}
\and
Preyas Popat\inst{2}
}

\institute{
Microsoft Corp. \\ Mountain View, CA \\ rina@microsoft.com
\and
New York University and University of Chicago \\ New York, NY and Chicago, IL \\ popat@cs.nyu.edu
}

\maketitle

\begin{abstract} 
Consider the classical problem of predicting the next bit in a sequence of bits. 
A standard performance measure is {\em regret} (loss in payoff) with respect to a set of experts. 
For example if we measure performance with respect to two constant experts one that always predicts $0$'s and another that always predicts $1$'s it is well known that one can get regret $O(\sqrt T)$ 
with respect to the best expert by using, say, the weighted majority algorithm \cite{weighted-majority}. But this algorithm does not provide performance guarantee in any interval. 
There are other algorithms (see \cite{blum-mansour,freund-schapire-singer-warmuth,vovk}) that ensure regret $O(\sqrt {x \log T})$ in any interval of length $x$. In this paper we show
a randomized algorithm that in  an amortized sense gets a regret of $O(\sqrt x)$ for any interval when the sequence is partitioned into intervals arbitrarily. 
We empirically estimated the constant in the $O()$ for $T$ upto $2000$ and found it to be  small -- around $2.1$. We also experimentally evaluate the efficacy of this algorithm in predicting high frequency stock data.

\end{abstract}

\section{Introduction}
Consider the following classical game of predicting a binary $\pm 1$
sequence.  An algorithm $A$ sees a binary sequence $\{b_t\}_{t\ge
  1}$, one bit at a time, and attempts to predict the next bit $b_t$
from the past history $b_1,\ldots b_{t-1}$. The {\em payoff} $A_T$ of the algorithm in $T$ steps
is the number of correct guesses minus the number
of the wrong guesses.  In other words, let $\bt_t \in [-1,1]$ be the prediction for the $t^{th}$ bit based on the previous bits then:
$$ 
A_T := \sum_{1\le t\le T} b_t\bt_t.
$$

The payoff per time step $b_t\bt_t$ is essentially equivalent to the well known
absolute loss function $|b_t-\bt_t|$ (see for example ~\cite{plg}, chapter 8).\footnote{since when $|b_t| = 1$, $|b_t-\bt_t| = |b_t||b_t-\bt_t| = |1 - b_t \bt_t| = 1 - b_t \bt_t$.  Thus the absolute loss function is the negative of our payoff in one step plus a shift of $1$.  Also $b_t$ values from $\{-1,1\}$ or $\{0,1\}$ are equivalent by a simple scaling and shifting transform.}

One can view this game as an idealized ``stock prediction'' problem as
follows. In each unit time, the stock price goes up or down by precisely $1\%$, and the algorithm bets on this event. If the bet is right, the
player wins one dollar, and otherwise loses one dollar.  Not
surprisingly, in general, it is impossible to guarantee a positive
payoff for all possible scenarios (sequences). 
However, one could hope to give some guarantees on the payoff
of the algorithm based on certain properties of the sequence.

For example one can compare the payoff to the better of two choices (experts),
which correspond to two constant algorithms: first one, where $\bt_t=+1$
and the second one where
$\bt_t=-1$ for all $t$. Note that the best of these experts gets payoff 
$|\sum_{1\le t\le T}  b_t|$, which
corresponds to the ``optimal in hindsight'' expert among the two choices. 
The {\em regret} of an algorithm is defined as how much
worse the algorithm performs as opposed to the best of the two
experts (in hindsight, after seeing the sequence). This has been
studied in a number of papers, including \cite{cover-binary,weighted-majority,cover-portfolios,auer-nonstoch,audibert-bubeck}. A classical result says that one can obtain a
regret of $\Theta(\sqrt{T})$ for a sequence of length $T$, via, say,
the weighted majority algorithm \cite{weighted-majority}.  Formally, for a sequence $X = b_1, \ldots, b_T$, let $h(X)=\sum_{1\le t\le T}  b_t$  denote the ``height" of the sequence when plotted cumulatively as a chart.    
Then we have the following theorem:
\begin{theorem}\cite{cover-binary,cesa1997use}
There is an algorithm that achieves payoff $\ge |h(X)| - \alpha\sqrt{T}$. It is also known that 
the optimal value of $\alpha \rightarrow \sqrt{2/\pi}$ as $T \rightarrow \infty$.
\end{theorem}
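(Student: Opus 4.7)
The plan is to derive the algorithm and its guarantee from a minimax analysis of the game between the predictor and the adversary, where the ``state'' is the current running sum of the bits seen so far together with the number of rounds remaining. Specifically, let $s$ denote the current cumulative sum $\sum_{i \le t} b_i$ and let $V_t(s)$ denote the minimax value of the quantity $|h(X)| - \text{payoff}$ when $t$ rounds remain and the current running sum is $s$. The boundary condition is $V_0(s)=|s|$, since at the end there is no more payoff to collect and $|h(X)|$ equals the absolute value of the final running sum.

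First I would write down the one-step recursion. Since the predictor may play any $\tilde b \in [-1,1]$ and the adversary responds with $b \in \{-1,+1\}$,
\[
V_t(s) \;=\; \min_{\tilde b \in [-1,1]}\; \max_{b \in \{-1,+1\}}\bigl[-b\tilde b + V_{t-1}(s+b)\bigr].
\]
The inner maximum is minimized by equalizing the two branches, which gives the explicit optimal prediction $\tilde b_t = \tfrac{1}{2}\bigl(V_{t-1}(s+1) - V_{t-1}(s-1)\bigr)$, and correspondingly
\[
V_t(s) \;=\; \tfrac{1}{2}\bigl(V_{t-1}(s+1) + V_{t-1}(s-1)\bigr).
\]
One must verify that the minimizing $\tilde b$ lies in $[-1,1]$; this follows by induction from the fact that $V_t$ is $1$-Lipschitz in $s$, which is preserved by the averaging recursion and holds at the base case $V_0(s)=|s|$.

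Solving this linear recursion is the key step, and it is clean: the solution is simply
\[
V_t(s) \;=\; \mathbb{E}\bigl[\,|s + W_t|\,\bigr],
\]
where $W_t = \epsilon_1 + \dots + \epsilon_t$ is a simple symmetric random walk with $\epsilon_i \in \{-1,+1\}$. One checks this by induction: the averaging step $V_{t}(s)= \tfrac12 V_{t-1}(s+1)+\tfrac12 V_{t-1}(s-1)$ is exactly conditioning on the first increment of the walk. Starting the game at $s=0$ with $T$ rounds left, the total regret of the algorithm that plays this optimal $\tilde b_t$ is at most $V_T(0) = \mathbb{E}\bigl[|W_T|\bigr]$, which yields the claim with $\alpha = \mathbb{E}\bigl[|W_T|\bigr]/\sqrt{T}$.

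Finally, for the asymptotic constant, I would invoke the standard Stirling / CLT computation: by the central limit theorem $W_T/\sqrt T$ converges in distribution to a standard normal $Z$, and a uniform-integrability argument (or a direct Stirling estimate of $\mathbb{E}|W_T|=\tfrac{T}{2^{T}}\binom{T}{\lfloor T/2\rfloor}$ times an appropriate factor) gives $\mathbb{E}|W_T|/\sqrt T \to \mathbb{E}|Z| = \sqrt{2/\pi}$. I expect the main obstacle to be not the random walk calculation but the care needed in the minimax step: namely, verifying that restricting the adversary to $\{-1,+1\}$ while allowing the predictor $\tilde b \in [-1,1]$ really does collapse the recursion to a plain average, and that the induced predictions stay feasible in $[-1,1]$ for every reachable state.
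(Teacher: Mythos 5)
Your argument is correct and is essentially the approach the paper itself relies on: the theorem is only cited (to Cover and to Cesa-Bianchi et al.) without proof, but your minimax value function $V_t(s)=\E\bigl[\,|s+W_t|\,\bigr]$ together with the equalizing prediction $\tilde b=\tfrac12\bigl(V_{t-1}(s+1)-V_{t-1}(s-1)\bigr)$ is exactly the specialization of Observation~\ref{cover} (Cover's feasibility criterion with the random-completion predictor) to the payoff function $f(X)=|h(X)|-\alpha\sqrt{T}$, and the telescoping of the one-step guarantee, the Lipschitz check, and the CLT/uniform-integrability step are all sound. The only point worth making explicit for the word ``optimal'' is the matching lower bound, which follows from the same quantity: against a uniformly random sequence every algorithm has expected payoff $0$ while $\E|h(X)|=\E|W_T|$, so no algorithm can achieve $\alpha<\E|W_T|/\sqrt{T}$.
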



However, an algorithm that only focuses on the overall regret does not exploit short term trends in the sequence and only relies on a `global' long term bias in the full string. Consider for example a sequence
that may not have a high overall bias but has many intervals in which there may be a high level of bias. Our result is that for any partitioning of the
sequence into intervals, one can essentially get a regret proportional to $\sqrt x$ for each interval of length $x$ in an amortized sense (Theorem \ref{main}).  Although our results are stated for bits they work even when $b_t$ is a real number in $[-1,1]$.   We note that even though similar bounds  have been obtained before (\cite{blum-mansour,freund-schapire-singer-warmuth,vovk} and, more recently, \cite{hazan-seshadhri,KP11}), the penalty on an interval of length $x$ is $O(\sqrt {x \log T})$ in these previous results.  

The bit prediction problem we consider is closely related to the two experts
problem (or multi-armed bandits problem with full information).
In each round each expert has a
payoff in the range $[0,1]$ that is unknown to the algorithm. For two
experts, let $b_{1t}, b_{2t}$ denote the payoffs of the two
experts at time $t$. The algorithm pulls each arm (expert) with probability
$\bt_{1t}, \bt_{2t} \in [0,1]$ respectively where $\bt_{1t} + \bt_{2t}
= 1$.  The payoff of the algorithm in this setting is $A'_T := \sum_{t=1}^T b_{1t}\bt_{1t}
+ b_{2t}\bt_{2t} $. 

We will be concerned with the following payoff function in this paper:

\begin{Def} \label{interval-payoff} (Interval payoff function: $P_\alpha$)

Let $X_1, \ldots, X_k$ denote a partition of the sequence $X$ into a disjoint union of $k$ intervals. that is, $X$ is the concatenation of these $k$ subsequences. We will use
$h(X_i)$ to denote the sum of the bits in the interval $X_i$ and $\abs{X_i}$ to denote the  length of $X_i$.

The interval payoff function, $P_\alpha(X)$ is defined as the maximum value of the expression
$$ \displaystyle\sum_{i = 1}^k \left( |h(X_i)| -  \alpha \sqrt{|X_i|} \right) $$
over all $1 \leq k \leq |X|$ and all partitions $X_1, \ldots, X_k$ of $X$.
\end{Def}

We say that a payoff function $f : \{-1, 1\}^T \to \R$ is feasible if there is a bit prediction algorithm which on sequence $X$ achieves payoff at least $f(X)$.

\begin{theorem} {\bf(Main Theorem)} \label{main}
There is an absolute constant $\alpha < 10$ such that the payoff function $P_\alpha$ is feasible.
\end{theorem}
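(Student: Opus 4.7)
My plan is the standard potential-function method from online learning. I introduce a potential $\Phi : \{-1,+1\}^{\le T} \to \R$ with three properties: (i) $\Phi(\emptyset)=0$ and $\Phi(X) \ge P_\alpha(X)$ for every $X$; (ii) $\Phi$ is concave in its last coordinate, meaning $\Phi(Y_+) + \Phi(Y_-) \le 2\Phi(Y)$ for every prefix $Y$, where $Y_\pm$ denotes $Y$ extended by $\pm 1$; and (iii) $|\Phi(Y_+) - \Phi(Y_-)| \le 2$. Given such a $\Phi$, the prediction rule at step $t$ is to output $\bt_t = \tfrac{1}{2}(\Phi(Y_+) - \Phi(Y_-))$ where $Y$ is the history so far; this lies in $[-1,1]$ by (iii). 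A short calculation using (ii) shows $b_t \bt_t \ge \Phi(X_{\le t}) - \Phi(X_{<t})$ for either value of $b_t$, and summing telescopes to $A_T = \sum_t b_t \bt_t \ge \Phi(X) \ge P_\alpha(X)$, which is exactly feasibility.

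The entire difficulty is constructing $\Phi$. The naive choice $\Phi = P_\alpha$ satisfies (i) and (iii) but can fail (ii): if the optimal partition of $Y$ ends in a long, low-bias trailing interval of length $\ell$, then appending $+1$ or $-1$ shifts $|h|$ of that terminal interval by one in each case while the penalty $\alpha\sqrt{\ell}$ does not respond, producing a concavity deficit of order $\alpha(\sqrt{\ell+1}-\sqrt{\ell})$ on the subtle side. I would therefore build $\Phi$ as a controlled relaxation of $P_\alpha$ --- for instance by replacing each term $|h(X_i)| - \alpha\sqrt{|X_i|}$ in the definition by a smoothed surrogate (capping at $0$, or softening the $\sqrt{\,}$ penalty near its threshold) so that the resulting functional is concave in the terminal bit, or by enlarging the class of allowed partitions (e.g.\ fractional ones) and taking an explicit concave envelope. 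The bulk of the proof is then a case analysis that verifies (ii) for this $\Phi$.

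The main obstacle is this verification. The tightest case arises precisely when the optimal partition's terminal interval has $|h| \approx \alpha\sqrt{\ell}$, where the penalty almost cancels the bias gain. Here the idea is to argue that for $\alpha$ a sufficiently large absolute constant, the concavity deficit is always absorbed by an alternative partition of $Y$ which splits the last bit off as a singleton --- paying $\alpha$ for the singleton but, when averaged over the two choices of $b_t$, recovering the lost $|h|$ increment via $\tfrac12(|s+1|+|s-1|) \ge \max(1,|s|)$ on the shortened trailing interval. Property (iii) must be preserved by whatever smoothing is used to secure (ii); since $|h|$ itself is $1$-Lipschitz in the last bit this should come essentially for free, but it has to be checked alongside the concavity argument. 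Pinning the absolute constant down to $\alpha < 10$ is then a purely quantitative matter of working out the smoothing, and the empirical $\alpha \approx 2.1$ reported in the abstract suggests that this analytic bound is quite loose.
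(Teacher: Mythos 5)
Your potential-function framework is sound and is in fact equivalent to the paper's starting point (Observation 3, due to Cover): the minimal $\Phi$ dominating $P_\alpha$ at the leaves and satisfying your supermartingale condition (ii) is the conditional expectation $\Phi(Y)=\E_U[P_\alpha(Y.U)]$, whose value at the empty prefix is $\E_S[P_\alpha(S)]$. So the existence of a $\Phi$ with your properties (i)--(iii) is essentially the statement $\E_S[P_\alpha(S)]\le 0$ for uniformly random $S$, and conditions (ii) and (iii) come for free once that expectation bound is known. The genuine gap is that your proposal never establishes this bound, and your discussion of the ``tightest case'' aims at the wrong difficulty: single-step concavity is automatic for the conditional-expectation potential, whereas the real work is a global one --- controlling the expected value of a maximum over exponentially many partitions of $\sum_i(|h(X_i)|-\alpha\sqrt{|X_i|})$, where each of roughly $2^{T-1}$ partitions contributes a heavy-tailed candidate. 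Your proposed smoothings (capping at $0$, softening the square root, fractional partitions, concave envelopes) would each still require exactly this expectation estimate to conclude $\Phi(\emptyset)\le 0$ while keeping $\Phi\ge P_\alpha$, and no quantitative argument is offered that any constant, let alone $\alpha<10$, suffices.

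For comparison, the paper's proof supplies precisely this missing piece in two steps. First it restricts to dyadic (``aligned'') intervals, showing that any interval decomposes into aligned pieces whose root-lengths sum to at most $\frac{\sqrt{2}}{\sqrt{2}-1}\approx 3.42$ times the original root-length, so feasibility of the aligned variant $P^A_\alpha$ implies feasibility of $P_{c\alpha}$. Second, it proves $\E[P^A_\alpha(S)]\le 0$ for $\alpha\le 2.8$ by induction on powers of two: the key invariant is that the distribution of $P^A_\alpha$ on random sequences of length $n$ is stochastically dominated by a shifted exponential with parameters proportional to $\sqrt{n}$; the doubling step splits into the ``whole interval'' case (handled by Hoeffding) and the ``split at the midpoint'' case (handled by convolving two independent shifted exponentials), each absorbed into half the target tail. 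Multiplying the two constants gives $\alpha<10$. If you want to salvage your approach, you should replace the local concavity analysis with an argument of this type --- some form of chaining or tail bound over the partition lattice --- since that is where all the content of the theorem lies.
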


For the two experts problem our result tranlates to the following guarantee:
 $$ A'_T \geq \sum_{i = 1}^k \left(  \max_{j \in {1,2}} \left(\sum_{t \in X_i}  b_{jt} \right) -  \frac{\alpha}{2} \sqrt{|X_i|} \right) .$$
Here $\sum_{t \in X_i}  b_{jt}$ is the payoff of the $j^{th}$ expert in the interval $X_i$.

This can be viewed as incurring a penalty of $\alpha \sqrt{|X_i|}$ for each interval  $X_i$.   We theoretically show that the optimal value of $\alpha$ is at most $10$ (Section \ref{proof}). We empirically estimated the optimal $\alpha$ for $T$ up to $2000$ and found it to be  small -- around $2.1$ (Section \ref{alpha}).  

We stress here that the algorithm doesn't need to know the partition or the length of the partition in advance.  We also note that our guarantee does not hold for each interval individually but when we look at the net payoff in an amortized sense, we may account for a regret of at most $\alpha \sqrt{|X|}$ for an interval of length $X$.  In fact, the guarantee is impossible to achieve in a non-amortized sense.  We show that if we measure regret based on the performance of an algorithm in a given interval then one will have to trade-off regrets at different time scales.

\begin{observation} (Observation \ref{impossible})
There is no prediction algorithm that can guarantee a regret of $O( \sqrt{|Y|})$ on all intervals $Y$ for all  input sequences.
\end{observation}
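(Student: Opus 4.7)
\medskip

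\noindent\textbf{Proof plan.} The plan is to prove the observation by contradiction: assume there exist an algorithm $A$ and a constant $C$ such that on every input sequence, the regret of $A$ on every interval $Y$ is at most $C\sqrt{|Y|}$, and derive a violating sequence and interval. I split the argument according to the size of $C$.

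For $C < 2$, I would first invoke the regret bound on a singleton interval $\{t\}$ for both adversarial choices of $b_t \in \{\pm 1\}$. The two constraints $1 - \tilde b_t \le C$ and $1 + \tilde b_t \le C$ together force $|\tilde b_t| \le C - 1$, and for $C < 1$ these constraints are already jointly infeasible. Substituting this per-step bound into the regret inequality for the all-$(+1)$ sequence of length $k$ shows that the algorithm's payoff on such a run is at most $k(C-1)$, so the regret is at least $k(2 - C)$. Comparing this with the bound $C\sqrt k$ yields $k \le (C/(2-C))^2$, which is violated as soon as $k$ is larger, giving the contradiction.

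For $C \ge 2$, where the singleton bound becomes vacuous, I would reduce to the classical ``tracking the best expert'' lower bound (e.g., Freund--Schapire--Singer--Warmuth~\cite{freund-schapire-singer-warmuth}). Specialized to the two-experts bit-prediction setting, this lower bound says that for any online algorithm there exists a length-$T$ sequence, partitioned into $k$ constant blocks of length $T/k$, on which the total regret summed across blocks against the best per-block sign is $\Omega(\sqrt{kT\ln(T/k)})$. If the supposed per-interval bound $C\sqrt{T/k}$ held on every block, then summing across the $k$ blocks would give total regret at most $Ck\sqrt{T/k} = C\sqrt{kT}$. Choosing $k = \lfloor \sqrt T \rfloor$ and $T \gtrsim e^{cC^2}$ for an appropriate absolute constant $c$, the upper bound $C\sqrt{kT}$ becomes strictly smaller than the lower bound $\Omega(\sqrt{kT\ln(T/k)})$, which yields the desired contradiction on at least one of the blocks.

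The main obstacle is confirming that the tracking-experts lower bound, usually stated for $N$-expert prediction, transfers cleanly to our bit-prediction setup with the two constant experts $+1$ and $-1$. Its standard proof is by the probabilistic method: each of the $k$ blocks is independently assigned a uniformly random sign in $\{\pm 1\}$, which is exactly a distribution over block-constant binary sequences, and the best ``tracking'' expert assigns to each block its actual sign, which coincides with the better of the two constant experts on that block. Hence the per-block regret in our framework upper-bounds the block regret appearing in the tracking lower bound, and the reduction goes through.
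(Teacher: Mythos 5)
Your Case 1 ($C<2$) is fine but only rules out small constants; the content of the observation is the arbitrary-constant case, and there your argument has a fatal gap. The quantity you bound from below in Case 2 --- the regret summed over the $k$ blocks against the best per-block constant expert --- is exactly the \emph{amortized} interval regret, and the paper's Main Theorem shows this quantity can be kept as small as $\alpha\sum_i\sqrt{|X_i|}\le\alpha\sqrt{kT}$ for every partition simultaneously. Hence no lower bound of the form $\Omega(\sqrt{kT\ln(T/k)})$ can hold for two constant experts in this setting, and the contradiction you are after cannot be obtained by summing per-block bounds and comparing against any total-regret lower bound. Your own sketch of that lower bound also does not work: block-constant sequences with i.i.d.\ random signs are trivially predictable (predict the previous bit, losing $O(1)$ per block, so total regret $O(k)$), and the construction that does work in this two-expert setting --- uniformly random bits with fixed block boundaries --- gives expected comparator payoff $\Theta(k\sqrt{T/k})=\Theta(\sqrt{kT})$ against algorithm payoff $0$, which matches rather than exceeds the budget $Ck\sqrt{T/k}=C\sqrt{kT}$.

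The paper's proof exploits the fact that the guarantee is assumed to hold on every interval \emph{individually}, not just in sum, and does so by conditioning. Fix a block length $x$. On a uniformly random block the algorithm's expected payoff is $0$; on the event $S_1$ that $|h|>2c\sqrt{x}$, which has probability at least $e^{-O(c^2)}$, the per-interval guarantee forces payoff at least $c\sqrt{x}$; therefore, conditioned on the complementary event $S_2$, the expected payoff is at most $-c\sqrt{x}\,e^{-O(c^2)}=-\Omega_c(\sqrt{x})$. Concatenating $k$ independent blocks drawn from $S_2$ (the argument applies to each block after any prefix, since the guarantee holds for all sequences) forces total expected payoff $-\Omega(k\sqrt{x})$, hence regret $\Omega(k\sqrt{x})=\Omega(\sqrt{k}\cdot\sqrt{kx})$ on the single interval of length $kx$, contradicting the assumed bound $c\sqrt{kx}$ once $k$ is large. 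To repair your Case 2 you would need an argument of this type, which uses the individual-interval guarantee on an adversarially conditioned distribution rather than any aggregate lower bound.
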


Regarding the computation of $P_\alpha$, we show:

\begin{theorem} \label{compute} (Theorem \ref{algorithm})
The value of $P_\alpha(S)$ for a particular sequence $S$ of length $T$ can be computed using dynamic programming in time $O(T^3)$.
\end{theorem}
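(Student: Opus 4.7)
The plan is a straightforward dynamic program over prefixes. Write $S = s_1 s_2 \ldots s_T$, and for $0 \le i \le T$ let $f(i)$ denote the value of $P_\alpha$ evaluated on the prefix $s_1 \ldots s_i$, with the convention $f(0) = 0$. Any partition of $s_1\ldots s_i$ into consecutive intervals is uniquely specified by its last interval, namely $(s_{j+1}, \ldots, s_i)$ for some $0 \le j < i$, together with a partition of the preceding prefix $s_1\ldots s_j$. Since the objective in Definition \ref{interval-payoff} is additive over intervals, this decomposition gives the recurrence
$$f(i) = \max_{0 \le j < i}\left( f(j) + \bigl|s_{j+1} + \cdots + s_i\bigr| - \alpha\sqrt{i - j}\right),$$
and $P_\alpha(S) = f(T)$.

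To implement this efficiently I would first precompute the prefix sums $\Sigma_i = s_1 + \cdots + s_i$ for $0 \le i \le T$ in linear time, so that each interval sum can be evaluated in $O(1)$ via $|\Sigma_i - \Sigma_j|$. Computing the table $f(1), f(2), \ldots, f(T)$ in order then requires $O(i)$ work per entry (one maximization over at most $i$ candidate split points $j$), for a total of $O(T^2)$ time and $O(T)$ space. This is comfortably within the claimed $O(T^3)$ bound; in fact a slightly sharper statement of the theorem is possible, but the weaker bound suffices for the paper's purposes.

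The only point that really needs justification is the optimal substructure property underlying the recurrence. This follows directly from the fact that $P_\alpha$ maximizes an \emph{additive} objective over partitions into \emph{consecutive} intervals: the contributions of the last interval and of the partition of the preceding prefix depend on disjoint sets of bits, so optimizing jointly factors into choosing the split point $j$ and then using an optimal partition of $s_1 \ldots s_j$, whose value is by induction exactly $f(j)$. There is no serious obstacle here; once the recurrence is written down, correctness and the time bound follow immediately, and the DP is routine to implement using standard tabulation.
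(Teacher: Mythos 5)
Your proof is correct, but it uses a genuinely different decomposition from the paper's. The paper tabulates $P_\alpha(S_{ij})$ for \emph{every} subinterval $(i,j)$, computing each entry as a maximum over all internal split points $k$ of $P_\alpha(S_{ik}) + P_\alpha(S_{kj})$ together with the single-interval value $|h(S_{ij})| - \alpha\sqrt{j-i+1}$; this gives $O(T^2)$ entries at $O(T)$ work each, hence the stated $O(T^3)$ time and $O(T^2)$ space. You instead exploit the fact that only $P_\alpha$ of the full sequence is asked for, and run a one-dimensional DP over prefixes keyed by the last interval of the partition. Since the objective of Definition \ref{interval-payoff} is additive over the consecutive blocks of a partition, your optimal-substructure argument is sound, and your recurrence yields $O(T^2)$ time and $O(T)$ space --- strictly better than the claimed bound, which of course still certifies the theorem as stated. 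What the paper's heavier table buys is reusability: the prediction algorithm of Theorem \ref{Compute} needs $P_\alpha$ evaluated on many subsequences (in particular on all suffixes of the precomputed random completions), and the all-subintervals table provides these directly, whereas your prefix table would have to be recomputed or extended for that purpose. For the statement actually being proved, your argument is complete and slightly sharper.
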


For a given $T$, let $\alpha_0(T)$ denote the minimum $\alpha$ such that $P_\alpha$ is feasible for all sequences of length $T$.  It is possible to determine $\alpha_0$ using the following well known observation
by Cover.

\begin{observation}[Cover~\cite{cover-binary}] \label{cover} 
A payoff function $f : \{-1, 1\}^T \to \R$  is feasible if and only if $E_S[f(S)] \le 0$ where $S$ is a uniformly random sequence in $\{-1, 1\}^T$.

This is achieved by a prediction algorithm that predicts
$\bt_t =  \frac{E_{U}[f(s.1.U)] - E_{U}[f(s.(-1).U)]}{2}$ where $s$ is the sequence of bits seen so far, $U$ is a suffix sequence chosen uniformly at random and $s.b.U$ denotes the concatenated sequence starting with $s$ followed by bit $b$ followed by the sequence $U$.
Note that $\bt_t \in [-1,1]$ as long as for all  $s$, $|E_{U}[f(s.1.U)] - E_{U}[f(s.(-1).U)]| \le 2$ 
\end{observation}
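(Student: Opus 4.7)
My plan is to prove both directions by working with the natural backward-averaging potential function. Define $\phi : \{-1,1\}^{\le T} \to \R$ by $\phi(s) = \E_U[f(s.U)]$ where $U$ is a uniformly random suffix of the appropriate length so that $s.U$ has length $T$; in particular $\phi(X) = f(X)$ for $|X|=T$ and $\phi(\epsilon) = \E_S[f(S)]$. A one-step expansion gives the martingale identity $\phi(s) = \tfrac12 \phi(s.1) + \tfrac12 \phi(s.(-1))$.

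For the necessity direction, I would observe that for \emph{any} algorithm and a uniformly random sequence $S$, the per-step expected payoff vanishes: conditional on the past $s_{<t}$, the prediction $\bt_t$ is determined, while $b_t$ is uniform on $\{-1,1\}$, so $\E[b_t \bt_t \mid s_{<t}] = 0$, and therefore $\E_S[A_T] = 0$. If the algorithm is a witness that $f$ is feasible then $f(S) \le A_T(S)$ pointwise, so taking expectations yields $\E_S[f(S)] \le 0$.

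For the sufficiency direction, I would analyze the specific predictor $\bt_t = (\phi(s_t.1) - \phi(s_t.(-1)))/2$ on an arbitrary adversarial sequence $X$. The key one-line calculation is that for either value of $b_t \in \{-1,1\}$,
\[
b_t \bt_t \;=\; \phi(s_t.b_t) - \phi(s_t),
\]
which is immediate by cases from the martingale identity. Telescoping over $t=1,\dots,T$ gives $A_T = \phi(X) - \phi(\epsilon) = f(X) - \E_S[f(S)] \ge f(X)$, using the assumption $\E_S[f(S)] \le 0$. The only thing left is admissibility: the predictor must lie in $[-1,1]$, which holds exactly under the hypothesis $|\E_U[f(s.1.U)] - \E_U[f(s.(-1).U)]| \le 2$ stated in the observation.

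I do not expect any real obstacle here; the whole argument rests on the two-line telescoping identity above and the fact that any algorithm has expected payoff $0$ against a fair random bit stream. The only thing worth being careful about is bookkeeping of the suffix length in the definition of $\phi$, since $\phi$ must be indexed by the time step so that the ``uniform suffix'' always extends $s$ to the full horizon $T$; with that convention the martingale relation and the terminal identity $\phi(X)=f(X)$ are both automatic.
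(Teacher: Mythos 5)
Your proof is correct and is essentially the standard argument the paper relies on (it cites Cover for this observation rather than reproving it, and sketches the same idea in its proof of Theorem \ref{real}): the backward-averaging potential $\phi(s)=\E_U[f(s.U)]$, the identity $b_t\bt_t=\phi(s.b_t)-\phi(s)$, and telescoping to $A_T=f(X)-\E_S[f(S)]$, together with the zero-expected-payoff argument for necessity. No gaps; this matches the intended proof.
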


{\bf Algorithm and Running time:}  
Theorem \ref{compute} and Observation \ref{cover} suggest a simple algorithm for achieving payoff function $P_\alpha$.  Take the sequence $s$ seen so far, append a $+1$ and then a random sequence to make it into a complete sequence of length $T$. Compute $P_\alpha(S)$ for the resulting sequence $S$. Do this again replacing the $+1$ by a $-1$. Predict $\bt_t$ to be the half of the difference in the two cases.

We note that a deterministic algorithm achieving the guarantee of Theorem \ref{main} may take exponential time since it would need to find $P_\alpha(S)$ for every random completion of the bits seen so far.
Alternatively, there is a simple randomized algorithm which achieves the same payoff in expectation by taking a different random completion for every prefix.   A naive implementation of this randomized algorithm 
will take $T^3$ time for each bit being predicted. We show a simple variant that reduces this to $O(\log T)$ time with
pre-computation.

\begin{theorem} (Theorem \ref{Compute})
There is a randomized algorithm that achieves the payoff guarantee $P_\alpha$ of Theorem \ref{main} in expectation and spends $O(T^2)$ time per step.
There is also a randomized algorithm that achieves payoff $P_{\alpha'}$ with $\alpha' = c \alpha$ and spends only
$O(\log T)$ time per step.  Here $c := \frac{\sqrt{2}}{\sqrt{2} - 1}$. 

Both algorithms above use pre-computed information that takes $O(T^2)$ space and is computed in $O(T^4)$ time.
\end{theorem}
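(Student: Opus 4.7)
The plan splits into the two running-time claims. For the $O(T^2)$ per-step algorithm I combine Observation~\ref{cover} with one Monte-Carlo sample per step. At time $t$, given the observed prefix $s$ of length $t-1$, I draw one uniform suffix $U\in\{-1,1\}^{T-t}$ and output
$$\bt_t \;=\; \tfrac{1}{2}\bigl(P_\alpha(s\cdot 1\cdot U)-P_\alpha(s\cdot (-1)\cdot U)\bigr).$$
Since $U$ is independent of the adversarial input, $\E[\bt_t\mid s]$ equals the exact Cover prediction, so by Observation~\ref{cover} applied to the payoff function $P_\alpha$ (feasible by Theorem~\ref{main}), the expected payoff is at least $P_\alpha(X)$ for every sequence $X$. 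The $O(T^4)$-time, $O(T^2)$-space offline table stores the expectations, over a uniform random suffix, of the DP contributions appearing in the $P_\alpha$ recurrence $f[j]=\max_{i<j}(f[i]+|h[j]-h[i]|-\alpha\sqrt{j-i})$ for each pair of positions, so that the suffix portion of each $P_\alpha$ evaluation need not be re-solved from scratch at every step; the remaining prefix portion is then updated in $O(T^2)$ work per step.

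For the $O(\log T)$ per-step algorithm with blowup $c=\sqrt{2}/(\sqrt{2}-1)$, I use a dyadic doubling scheme. Partition $[1,T]$ into non-overlapping phases of lengths $1,2,4,\ldots,2^{\log T}$, and offline construct on each phase $\Phi$ of length $2^k$ a $P_\alpha$-optimal oracle for a length-$2^k$ game via Cover's construction. The oracle on $\Phi$ only needs the $O(4^k)$ conditional-expectation values called for by Cover's formula, giving $\sum_k O(4^k)=O(T^2)$ total space, built in $O(T^4)$ time. At runtime the algorithm activates the current phase's oracle, outputs its prediction, and advances its internal pointer via a single table lookup per step; the $O(\log T)$ cost absorbs phase bookkeeping across the $\log T$ scales. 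For the payoff guarantee, any partition interval $X_i$ decomposes into $Y_1,\ldots,Y_m$ (its intersections with the phases); the per-phase $P_\alpha$ guarantee yields payoff at least $|h(Y_j)|-\alpha\sqrt{|Y_j|}$ on each $Y_j$, the triangle inequality gives $\sum_j|h(Y_j)|\ge |h(X_i)|$, and a geometric series bounds $\sum_j\sqrt{|Y_j|}\le c\sqrt{|X_i|}$, so summing over $i$ delivers the $P_{c\alpha}$ guarantee.

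The main obstacle I expect is the combinatorial lemma behind the second part: extracting the precise constant $c=\sqrt{2}/(\sqrt{2}-1)$ from $\sum_j\sqrt{|Y_j|}$ is tight only when the dyadic phase sizes grow geometrically, but a partition interval $X_i$ that barely intrudes into a much longer phase threatens to inflate the constant. Resolving this typically requires either a shifted or overlapping family of dyadic phases or a case split on the scale of $|X_i|$ relative to the phase boundaries, and the oracle combination rule must be chosen so that the per-phase $P_\alpha$ guarantee still holds in expectation after aggregation. Once that lemma is in place, checking that phase-pointer advancement is truly $O(\log T)$ amortised per step and that the offline tables fit inside the $O(T^2)$-space and $O(T^4)$-time budgets is routine bookkeeping; the $O(T^2)$-per-step algorithm, by contrast, is essentially a direct plug-in of sampling into Cover's construction.
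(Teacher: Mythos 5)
Your first algorithm (sample one suffix per step and bet half the difference of the two $P_\alpha$ evaluations) is the paper's algorithm, and the unbiasedness argument via Observation \ref{cover} is right. But your description of what is precomputed is off: you say the table stores \emph{expectations over a uniform random suffix} of DP contributions, which cannot work because $P_\alpha$ is a max over partitions and the expectation of a max is not the max of expectations; moreover each per-step evaluation is of $P_\alpha$ on one \emph{specific} completed string, not an expectation. What the paper actually does is draw all the suffixes $U_0,\dots,U_{T-1}$ offline and store, for each $U_t$, the heights of all its prefixes and the values $P_\alpha(\cdot)$ of all its suffixes ($O(T)$ numbers per $t$, hence $O(T^2)$ space and $O(T^4)$ total precomputation). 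At runtime the optimal partition of $s\cdot b\cdot U_t$ is decomposed around the single interval $[i+1,j-1]$ straddling position $t+1$; the prefix quantities $P_\alpha(s_{1i})$ and $h(s_{(i+1)t})$ are maintained incrementally, the suffix quantities are read from the table, and the max over the $O(T^2)$ pairs $(i,j)$ gives the per-step bound. Your sketch never supplies this decomposition, which is the actual content of the running-time claim.

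The second part has a genuine gap that you flag but do not close. Running independent Cover oracles on disjoint consecutive phases of lengths $1,2,4,\dots$ does not yield $c=\sqrt{2}/(\sqrt{2}-1)$: an interval $X_i$ that barely enters a phase of length comparable to $|X_i|$ contributes an extra $\sqrt{|X_i|}$ on top of the geometric series, giving at best $c\ge 1+\sqrt{2}/(\sqrt{2}-1)$, and a tiny interval buried inside a huge phase receives its guarantee only through a careful amortized partition argument over that phase. The paper avoids phases entirely: it keeps a single global algorithm but replaces $P_\alpha$ by the aligned-interval payoff $P^A_\alpha$ (Definition \ref{align-def}), the max over partitions into dyadic-tree intervals at \emph{all} scales. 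The per-step work is $O(\log T)$ because only $O(\log T)$ aligned intervals contain any given position, and the loss of exactly the factor $c$ is Lemma \ref{align-lemma}, which decomposes an arbitrary interval into aligned intervals and handles the two-halves case via $\sqrt{|J|}+\sqrt{|J'|}\le\sqrt{2x}$. This is the ``overlapping dyadic family'' you say would be needed; without it your construction proves $P_{c'\alpha}$ only for some larger $c'$. Separately, your claim that a phase of length $2^k$ needs only $O(4^k)$ conditional-expectation values for an exact Cover oracle is unsupported: Cover's predictor needs a value for every prefix, of which there are exponentially many, which is precisely why the paper uses the sampled randomized version throughout.
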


{\bf Generalization to real numbers:}  We show that a variant of the guarantee holds in a semi-adversarial model where a string of real numbers may be chosen
instead of bits. The model combines worst case and average case settings where the signs of the real numbers may be chosen adversarially (that is, in the worst case)
but the magnitudes of the real numbers come from a pre-specified distribution independently and randomly (Theorem \ref{real}) .

{\bf Experimental results: }  We implement our algorithm, the weighted majority algorithm,  an algorithm based on Autoregressive Integrated Moving Average (ARIMA) and an algorithm of \cite{KP11}, and compare their performance when
predicting financial time series data.  Specifically, we consider the high frequency price data of $5$ stocks, and we apply these algorithms to predict the per minute price changes in an online fashion taking the values in each day as a separate 
sequence. That is we predict the next minute returns of mid-prices for each stock based on its previous $1$ minute returns in the day.  We perform this experiment over $189$ trading days for each stock and find
that on an average our algorithm performs better than other prediction algorithms based on regret minimization but is outperformed by the ARIMA algorithm.  On the other hand, as we discussed above, our algorithm has certain provable guarantees
for \emph{every} sequence which the ARIMA algorithm lacks.  The experimental setup and results are described in more detail in Section \ref{experiment}.

\subsection{Related work}

There is  large body on work on regret style analysis for prediction. Numerous
works including~\cite{cover-binary,cesa1997use} have examined the optimal amount
of regret achievable with respect to two or more experts. A good reference for the results in this area is~\cite{plg}.  It is well known that in the case of
static experts, the optimal regret achievable is exactly equal to the
Rademacher complexity of the predictions of the experts (chapter 8 in~\cite{plg}).
Recent works such as~\cite{abernethy2006continuous,abernethy2008optimal,mukherjee2008learning} have extended this analysis to other settings.
Measures other than the standard regret measure have been studied in~\cite{rakhlin2010online}  The question of what can be achieved if one would like to have a significantly better guarantee with respect to a fixed expert or a distribution of experts was asked before in \cite{kearns-regret,KP11}.  Tradeoffs between regret and minimum payoff were also examined in \cite{vovk-game}, where the author studied the set of values of $a, b$ for which an algorithm can have payoff $a OPT+b\log N$, where $OPT$ is the payoff of the best arm and $a, b$ are constants.  

Regret minimization algorithms with performance guarantees within each interval have been studied in \cite{blum-mansour,freund-schapire-singer-warmuth,vovk} and more recently in \cite{hazan-seshadhri,KP11}.
As we mentioned, some of these algorithms achieve a regret of $O(\sqrt {x \log T})$ for every interval of size $x$ in a sequence of length $T$.  A related work which also seeks to exploit short term trends in the sequence is \cite{HW}, where the regret bound proportional to $\sqrt{T k}$ in the best case where $k$ is the number of intervals (see \cite{plg}, Corollary 5.1).  The main difference between the work of \cite{HW} and our results is that their algorithm
requires fixing the number of intervals, $k$, in advance whereas our algorithm works simultaneously for all $k$. Also note
that their regret guarantee is always higher than the payoff function $P_\alpha$ for a sequence of length $T$ achieving equality only in the special case when all intervals are of equal length $T/k$.

Numerous papers (for example~\cite{blum1997empirical,helmbold1998line,agarwal2006algorithms}) have implemented algorithms inspired from regret style analysis and applied it on financial and other types of data.

\subsection{Overview of the proof}

In this section we give a high level idea of our proof, the formal proof appears in Section \ref{proof}.

To prove the main theorem we want to compute the minimum $\alpha$ such that $E_S [P_\alpha(S)] \leq 0$ (See Observation \ref{cover}). We first introduce a variant of the payoff function $P_\alpha(S)$ as follows.  Instead of computing the maximum value of $\sum_i |h(X_i)| -  \alpha \sqrt{|X_i|}$
over all possible partitions, will only allow partitions where the intervals are of the form $(2^i j, 2^i (j+1)]$; that is, intervals that are obtained by dividing the string into
segments of length that are some power of $2$. We will refer to such intervals as `aligned' intervals (Definition \ref{align-def}). Further we will only look at $T$ values that is some power of $2$. Note that any interval can be broken into at most $\log T$
aligned intervals. Let $P^A_\alpha(S)$ denote the maximum value of  $\sum_i |h(X_i)| -  \alpha \sqrt{|X_i|}$ with partitions into aligned intervals.  We first show that

\begin{lemma} (Lemma \ref{align-proof}) \label{align-lem}
If $E[P^A_{ \alpha}(S)]  \le 0$ then $E[P_{c \alpha}(S)]  \le 0$ where $c := \frac{\sqrt{2}}{\sqrt{2} - 1}$.
\end{lemma}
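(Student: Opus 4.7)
The plan is to prove the pointwise bound $P_{c\alpha}(X) \le P^A_\alpha(X)$ for every sequence $X$; averaging over a uniformly random $S$ then immediately yields $\E[P_{c\alpha}(S)] \le \E[P^A_\alpha(S)] \le 0$, which is the desired conclusion.

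To prove the pointwise bound, fix $X$ and let $X_1, \ldots, X_k$ be a partition attaining the maximum in $P_{c\alpha}(X)$. For each $X_i$ I would compute the canonical dyadic decomposition of $X_i$ into aligned sub-intervals $Y_{i,1}, \ldots, Y_{i, r_i}$ (greedy-from-left, peeling off the longest aligned interval that still fits). The concatenation is an aligned partition of $X$ and hence a feasible candidate in the definition of $P^A_\alpha(X)$. The triangle inequality $|h(X_i)| \le \sum_j |h(Y_{i,j})|$ handles the payoff side, so the whole argument reduces to the geometric sub-claim
\[
\sum_j \sqrt{|Y_{i,j}|} \;\le\; c\,\sqrt{|X_i|}
\]
for each $X_i$, which when summed over $i$ controls the penalty term by $c\alpha \sum_i \sqrt{|X_i|}$ as needed.

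To prove the sub-claim I would describe the structure of the canonical decomposition of an arbitrary interval $I$ of length $\ell$. Let $m^*$ be the largest level $m$ such that some aligned interval of size $2^m$ fits inside $I$. The decomposition then has either one or two adjacent aligned pieces at level $m^*$ (the peak), flanked on each side by aligned pieces whose levels are distinct and strictly less than $m^*$. The leftover lengths $L$ and $R$ on the two sides of the peak satisfy $L, R < 2^{m^*}$, since otherwise another piece at level $m^*$ would fit. A short induction gives the bound $\sum_{k \in J}(\sqrt 2)^k \le (\sqrt 2+1)\sqrt L$ whenever $L = \sum_{k \in J} 2^k$ for distinct $k \in \{0, \ldots, m^*-1\}$, and similarly for the right side.

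Combining the two side bounds with the peak contribution, and invoking Cauchy-Schwarz in the form $\sqrt L + \sqrt R \le \sqrt{2(L+R)}$, yields $\sum_j \sqrt{|Y_j|} \le (\sqrt 2)^{m^*} + c\sqrt{L+R}$ in the one-peak case and $2(\sqrt 2)^{m^*} + c\sqrt{L+R}$ in the two-peak case. Writing $u = (\sqrt 2)^{m^*}$ and $v = \sqrt{L+R}$, squaring the desired inequality reduces it to the algebraic check $v/u \le (c^2-1)/(2c) = (2+3\sqrt 2)/4$ in the one-peak case and $v/u \le (c^2-2)/(2c) = \sqrt 2$ in the two-peak case; both are implied by $L+R < 2 \cdot 2^{m^*}$. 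The main obstacle is precisely this algebraic matching: the target constant $c = \sqrt 2/(\sqrt 2 - 1) = 2+\sqrt 2$ is only slightly larger than the true worst-case ratio of the canonical decomposition, so the peak term, the geometric side-sums, and the Cauchy-Schwarz step must all be combined carefully without bleeding too much slack.
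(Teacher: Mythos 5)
Your proposal is correct and follows essentially the same route as the paper: reduce to the pointwise claim that refining each $X_i$ into its canonical aligned (dyadic) decomposition loses nothing on the height term and inflates the penalty by at most $c=\frac{\sqrt{2}}{\sqrt{2}-1}$, via the geometric inequality $\sum_j\sqrt{|Y_{ij}|}\le c\sqrt{|X_i|}$. The paper verifies that inequality by splitting $I$ at a dyadic boundary into two runs of distinct-size aligned pieces and applying the geometric series $\sum_{j\ge 1}2^{-j/2}=\frac{1}{\sqrt{2}-1}$ together with $\sqrt{|J|}+\sqrt{|J'|}\le\sqrt{2}\sqrt{|I|}$, whereas you organize the same decomposition around the peak level and do a slightly sharper algebraic check; both land on the same constant.
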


Next we show

\begin{theorem} \label{main-part-2} (Theorem \ref{align-proof})
There is an absolute constant $\alpha \leq 2.8$ such that $E[P^A_\alpha(S)]  \le 0$.
\end{theorem}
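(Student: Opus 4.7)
The plan is to exploit the dyadic tree structure of aligned partitions and prove the bound by induction on depth $\ell$. For an aligned interval $I$ of length $2^\ell$, any aligned sub-partition of $I$ either uses $I$ as a single piece or recursively partitions the two halves $I_L, I_R$; this yields the dynamic-programming recursion
$$f(I) \;:=\; \max\bigl(Z_I,\ f(I_L) + f(I_R)\bigr), \qquad Z_I := |h(I)| - \alpha\sqrt{|I|},$$
with base case $f(I) = 1 - \alpha$ at $|I| = 1$, and one verifies $f(I) = P^A_\alpha(I)$ by induction on $\ell$. Thus the theorem reduces to showing $\E[f(S)] \le 0$ for a uniformly random $\pm 1$ sequence $S$ of length a power of $2$.

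I would attempt the induction with the strengthened hypothesis $\E[f(I)] \le -c\sqrt{|I|}$ for a constant $c > 0$ depending on $\alpha$; the base case already forces $c \le \alpha - 1$. For the inductive step, the identity $\max(a,b) = b + (a-b)^+$ applied with $b = f(I_L) + f(I_R)$ gives
$$\E[f(I)] \;=\; 2\,\E[f(I')] + \E\bigl[(Z_I - f(I_L) - f(I_R))^+\bigr],$$
so it remains to bound the error term. A first natural move is to apply the pointwise inequality $f(J) \ge Z_J$ (built into the recursion) to both children, converting the error into $\E\bigl[(\alpha(\sqrt 2 - 1)\sqrt{|I|} - D_I)^+\bigr]$, where $D_I := |h(I_L)| + |h(I_R)| - |h(I)| \ge 0$ quantifies the cancellation between the two halves. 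After rescaling by $\sqrt{|I|/2}$, this converges by the CLT to the Gaussian quantity $\E[(\alpha(2-\sqrt 2) - D)^+]$ with $D := |N_1|+|N_2|-|N_1+N_2|$ for i.i.d.\ standard normals $N_1, N_2$, which one evaluates by conditioning on whether $N_1, N_2$ share a sign (the cancellation vanishes in one case and reduces to $2\min(|N_1|,|N_2|)$ in the other).

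The main obstacle is sharpening this estimate enough to reach the claimed $\alpha = 2.8$, since the substitution $f \ge Z$ throws away the inductive information that $f(I_L)+f(I_R)$ is typically much more negative than $Z_{I_L}+Z_{I_R}$. The likely resolution is to carry the inductive hypothesis distributionally -- e.g.\ an MGF, variance, or tail bound on $f(I)$ alongside its mean -- so that the positive-part expectation benefits from the concentration of $f(I_L)+f(I_R)$ around its negative mean rather than from the one-sided bound $f \ge Z$ alone. Propagating such a strengthened hypothesis through the recursion without blow-up, and then closing the induction, would reduce the theorem to a single numerical inequality in $\alpha$, which a combination of Gaussian tail estimates at large $\ell$ with direct computation at small $\ell$ should verify at $\alpha = 2.8$.
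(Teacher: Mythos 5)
Your skeleton (the dyadic recursion $f(I)=\max(Z_I,\,f(I_L)+f(I_R))$ and an induction over scales) is exactly the paper's, and your closing diagnosis --- that a mean-only induction is too weak and one must ``carry the inductive hypothesis distributionally'' --- is precisely what the paper does. But the proposal has a genuine gap on two counts. First, the one concrete estimate you write down cannot be repaired for \emph{any} $\alpha$, not just for $\alpha=2.8$: after substituting $f\ge Z$ in both children, the error term satisfies $\E[(Z_I-Z_{I_L}-Z_{I_R})^+]\ \ge\ \E[Z_I-Z_{I_L}-Z_{I_R}] = \alpha(2-\sqrt2)\sqrt n - \E[D_I]$, and $\E[D_I]\to(2-\sqrt2)\sqrt{2/\pi}\,\sqrt n$ by the CLT. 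The induction budget, however, is $(2-\sqrt2)\,c\sqrt n$ with $c\le\alpha-1$ forced by the base case (and $c\le\alpha-\sqrt{2/\pi}$ even if you start the induction at a large scale, since $\E[P^A_\alpha]\ge\E[|h|]-\alpha\sqrt{n_0}$). Comparing, you would need $\sqrt{2/\pi}\ge 1$, which is false; so this route is a dead end rather than merely lossy.

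Second, the ``likely resolution'' is named but not executed, and that is where all the content of the theorem lives. The paper's inductive hypothesis is stochastic dominance: $\Pr[P^A_\alpha(x)\ge y]\le\Pr[F_{\mu,\sigma,n}\ge y]$ for all $y$, where $F_{\mu,\sigma,n}$ is an exponential of scale $\sigma\sqrt n$ shifted to start at $\mu\sqrt n$ with $\mu<0$. The step from $n$ to $2n$ union-bounds over the two branches of the max and shows \emph{each} resulting tail is at most $\tfrac12\Pr[F_{\mu,\sigma,2n}\ge y]$: the convolution $F_n+F'_n$ has density $\propto(w-2\mu\sqrt n)e^{-(w-2\mu\sqrt n)/(\sigma\sqrt n)}$, and the linear prefactor is absorbed because the target exponential has the larger scale $\sigma\sqrt{2n}$ and because the shift doubles from $\mu\sqrt{2n}$ to $2\mu\sqrt n$ (this is where $\mu<0$ buys slack, yielding the constraint $-\mu/\sigma\ge 2.47$); the single-interval branch is handled by Hoeffding, yielding $\mu+\alpha\ge 1.4\sigma+0.5/\sigma$. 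Together with $\mu+\alpha\ge 1$ from the base case, optimizing gives $\sigma\approx0.36$, $\mu=-2.47\sigma$, $\alpha=2.8$, and then $\E[P^A_\alpha]\le(\mu+\sigma)\sqrt n\le0$. Choosing a dominating family that is (up to the factor $\tfrac12$) closed under both convolution and the Hoeffding tail, and verifying the numerical constraints, is the proof; your proposal stops exactly at that point.
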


We prove Theorem \ref{main-part-2} recursively for $T$ that are increasing powers of $2$. 
We inductively show that the distribution of $P^A_\alpha(S)$ is stochastically upper bounded by a shifted exponential distribution (Definition \ref{shifted-exp}) with certain parameters (Equation \ref{dominance}), where $S$ is a uniformly random sequence of length $T$.
Since we are dealing with splits into aligned intervals, we can assume that either the best split for $S$ is the whole interval, or the mid-point of $S$ is one of the splitting points.  For the first case, we may
upper bound the payoff function using Hoeffding's bound (Theorem \ref{hoeffding}), while for the second case we may inductively assume that the distribution of payoffs for the subsequences is stochastically
bounded by a shifted exponential distribution.  We then separately bound each of this distributions by the shifted exponential distribution.

\section{Proof of Main theorem} \label{proof}

\subsection{Preliminaries}

\begin{Def} (Binomial distribution $B_n$)
Let $x_1, x_2, \ldots, x_n \in \{-1, 1\}$ be uniformly and independently distributed.  Then the sum 

$$ Y := \sum_{i=1}^n x_i $$

is said to be binomially distributed.  We denote the distribution as $B_n$.

\end{Def}

\begin{theorem} (Hoeffding's bound) \cite{hoeffding1963probability} \label{hoeffding}
$$ \Pr[|B_n| \geq y \cdot \sqrt{n}] \leq 2 \cdot \exp\left( - \frac{y^2}{2} \right) $$
\end{theorem}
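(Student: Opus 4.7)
The plan is to use the standard Chernoff/moment-generating-function (MGF) method, then symmetrize. Since this is a classical statement and the authors cite Hoeffding's original paper, the sketch is essentially a textbook derivation, but I will lay out the main steps.

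First I would handle the one-sided tail $\Pr[B_n \geq y\sqrt{n}]$ via the exponential Markov trick: for any $\lambda > 0$,
\begin{equation*}
\Pr[B_n \geq y\sqrt{n}] \;=\; \Pr\bigl[e^{\lambda B_n} \geq e^{\lambda y \sqrt{n}}\bigr] \;\leq\; e^{-\lambda y \sqrt{n}} \, \E\bigl[e^{\lambda B_n}\bigr].
\end{equation*}
Because the $x_i$ are independent and each uniform on $\{-1,1\}$, the MGF factorizes and each factor equals $\cosh(\lambda)$, so $\E[e^{\lambda B_n}] = \cosh(\lambda)^n$.

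The key analytic step is the bound $\cosh(\lambda) \leq e^{\lambda^2/2}$. I would justify this by a term-by-term comparison of Taylor series: $\cosh(\lambda) = \sum_{k\geq 0} \lambda^{2k}/(2k)!$ while $e^{\lambda^2/2} = \sum_{k\geq 0} \lambda^{2k}/(2^k k!)$, and $(2k)! \geq 2^k k!$ holds for every $k \geq 0$ (each product $(k+1)(k+2)\cdots(2k)$ dominates $2^k$). Combining, we get $\E[e^{\lambda B_n}] \leq e^{n\lambda^2/2}$, so
\begin{equation*}
\Pr[B_n \geq y\sqrt{n}] \;\leq\; \exp\!\left(\tfrac{n\lambda^2}{2} - \lambda y \sqrt{n}\right).
\end{equation*}
Optimizing over $\lambda > 0$, the right-hand side is minimized at $\lambda = y/\sqrt{n}$, yielding $\exp(-y^2/2)$.

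Finally, since $B_n$ is symmetric about $0$ (the $x_i$ are symmetric), the identical argument applied to $-B_n$ gives $\Pr[B_n \leq -y\sqrt{n}] \leq e^{-y^2/2}$, and a union bound produces the claimed $2e^{-y^2/2}$ for the two-sided tail. There is no serious obstacle here; the only nonobvious ingredient is the Taylor-series comparison $\cosh(\lambda) \leq e^{\lambda^2/2}$, and even that is standard. If one wanted to avoid this MGF bound altogether, an alternative would be to invoke a direct estimate on binomial tails, but the MGF route is cleaner and generalizes immediately to the weighted sums that may appear later in the paper.
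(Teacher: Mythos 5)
Your proof is correct: the exponential-Markov step, the factorization of the moment generating function into $\cosh(\lambda)^n$, the Taylor-series comparison $\cosh(\lambda) \leq e^{\lambda^2/2}$ via $(2k)! \geq 2^k k!$, the optimization at $\lambda = y/\sqrt{n}$, and the symmetrization with a union bound are all sound (with the implicit, standard restriction $y \geq 0$). The paper does not prove this statement at all --- it cites Hoeffding's 1963 paper and uses the bound as a black box --- so there is nothing to compare against; your derivation is the standard one and fills that gap correctly.
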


\begin{Def} (Aligned interval) \label{align-def}

We assume here that $T$ is a power of $2$.  An aligned interval is one which is obtained by breaking $[1, T]$ into $2^i$ equal parts for $i \in [0, \log T]$ and picking one of the parts.
So for instance the first part is always $[1, 2^i]$.

In other words, an interval $[p+1, p+x]$ given by $p \in [0, T]$, $x \in [1, T- p]$ as discussed above is said to be an aligned interval if $p = j \cdot 2^i$ and $x = 2^i$ for some
$i \in [0, \log T]$ and $j \in [0, T - 2^i]$.

\end{Def}

We denote the interval payoff function corresponding to Definition \ref{interval-payoff} which allows only aligned splits as $P_\alpha^A$.  

\begin{Def} \label{shifted-exp} (Shifted Exponential distribution)
The probability density function $f_{\mu, \sigma, n}$ of shifted exponential
distribution with mean $\sigma \sqrt{n}$ and shift $\mu \sqrt{n}$ is defined as follows:
\begin{align*}
f_{\mu, \sigma, n}(y) & := \frac{1}{\sigma \sqrt{n}} \exp\left(- \frac{y - \mu \sqrt{n}}{\sigma \sqrt{n}} \right)   & \forall y \geq \mu \sqrt{n} \\
f_{\mu, \sigma, n}(y) & := 0  & \forall y \leq \mu \sqrt{n}
\end{align*}

We denote a random variable distributed according to $f_{\mu, \sigma, n}$ as $F_{\mu, \sigma, n}$.  That is, $\Pr [F_{\mu, \sigma, n} \geq y]  = \int_{y}^\infty f_n(s) \, \mathrm{d}s  = \exp\left(- \frac{y - \mu \sqrt{n}}{\sigma \sqrt{n}} \right)$ when 
$y \geq \mu \sqrt{n}$ and $1$ otherwise.

\end{Def}

\subsection{Proof}

\begin{theorem}  \label{align-proof}
There is an absolute constant $\alpha \leq 2.8$ s.t. there is an algorithm which achieves payoff greater than $P^A_\alpha$ for all $T \geq 1$.
\end{theorem}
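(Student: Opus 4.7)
The plan is to invoke Observation \ref{cover} to reduce the statement to showing $\E[P^A_\alpha(S)] \le 0$ when $S$ is uniform on $\{-1,1\}^T$, and then establish this by induction on $k$ where $T = 2^k$. I would carry the stronger inductive hypothesis that for suitable constants $\mu, \sigma$ independent of $k$, with $\mu + \sigma \le 0$, the random variable $P^A_\alpha(S)$ is stochastically dominated by the shifted exponential $F_{\mu,\sigma,T}$ of Definition \ref{shifted-exp}. This implies $\E[P^A_\alpha(S)] \le \E[F_{\mu,\sigma,T}] = (\mu + \sigma)\sqrt{T} \le 0$, and is strong enough to propagate across the recursion.

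The key structural observation driving the induction is that no aligned interval may straddle the midpoint $T/2$, so every aligned partition of $[1,T]$ is either the trivial one $\{[1,T]\}$ or the concatenation of aligned partitions of its two halves, yielding
\[
P^A_\alpha(S) \;=\; \max\!\bigl(\,|h(S)| - \alpha\sqrt{T},\; P^A_\alpha(S_L) + P^A_\alpha(S_R)\,\bigr),
\]
where $S_L, S_R$ are the left and right halves of $S$. For uniform $S$, these halves are independent and uniform on $\{-1,1\}^{T/2}$, so the inductive hypothesis applies to each. Applying the union bound to the max and independent convolution to the sum gives
\[
\Pr[P^A_\alpha(S) \ge y] \;\le\; \Pr[\,|h(S)| \ge y + \alpha\sqrt{T}\,] \;+\; \Pr[F + F' \ge y],
\]
where $F, F'$ are independent copies of $F_{\mu,\sigma,T/2}$. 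Hoeffding (Theorem \ref{hoeffding}) bounds the first term by $2\exp(-(z+\alpha)^2/2)$ with $z := y/\sqrt{T}$, and the second is the explicit shifted-Gamma tail $(1+\lambda u)e^{-\lambda u}$ with $u = y - \mu\sqrt{2T}$ and $\lambda = 1/(\sigma\sqrt{T/2})$. Rescaling to $z$, requiring the sum to be at most $\exp(-(z-\mu)/\sigma)$ yields a single scalar inequality in $z$ with no residual dependence on $T$, so a single verification closes every level of the induction.

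The main obstacle is tuning $\alpha, \mu, \sigma$ (with $\mu + \sigma \le 0$) so this scalar inequality actually holds for every $z \ge \mu$. The two tails interact awkwardly: near the lower endpoint $z = \mu$ the Gamma contribution is already as large as $(1 + \sqrt{2}c)e^{-\sqrt{2}c}$ where $c = -\mu(\sqrt{2}-1)/\sigma$, forcing $|\mu|/\sigma$ to be substantial and $\alpha$ to be large enough that the Hoeffding tail is small there; for large $z$ the Gaussian Hoeffding decay dominates and the faster $\sqrt{2}/\sigma$ decay rate of the Gamma term eventually beats the target rate $1/\sigma$, absorbing the linear polynomial prefactor; the intermediate regime is where the constants truly get pinned down. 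I would finish by exhibiting a witnessing triple --- the paper's claim is that $\alpha \le 2.8$ suffices --- and verifying the scalar inequality by splitting into these regimes, reducing each via monotonicity to a handful of endpoint checks. The base case $T=1$ is immediate since $P^A_\alpha(S) = 1-\alpha$ is a point mass, trivially dominated by $F_{\mu,\sigma,1}$ provided $\mu \ge 1-\alpha$, which is compatible with $\mu + \sigma \le 0$ for sufficiently large $\alpha$.
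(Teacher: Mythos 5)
Your proposal follows essentially the same route as the paper's proof: reduction via Observation \ref{cover}, induction over powers of two carrying stochastic domination by a shifted exponential, the midpoint decomposition of aligned partitions with a union bound over the two cases, Hoeffding for the unsplit case, the explicit Gamma-type tail $(1+\lambda u)e^{-\lambda u}$ for the convolution of the two halves, and the same rescaling to a $T$-independent scalar inequality with the same base-case constraint $\mu \ge 1-\alpha$. The only piece you defer --- exhibiting the witnessing triple and checking the scalar inequality --- is exactly the paper's closing computation (it sets $\mu = -2.47\sigma$, optimizes $\sigma \approx 0.36$, and obtains $\alpha = 2.8$), so the proposal is correct and matches the paper's argument in all essential respects.
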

\begin{Proof}
We need to show that for all $T \geq 1$, $\E_{x \in \{-1,1\}^T} [P^A_\alpha(x)] \leq 0$.  After that, the theorem follows from Observation \ref{cover} (it is easy to check that the second condition
of Observation \ref{cover} is satisfied for $P^A_\alpha$).

We will prove the theorem by induction.  We will show that when $n$ is a power of $2$, 
\begin{equation} \label{dominance}
\forall y \in \R \quad  \Pr_{x \in \{-1,1\}^n} [P^A_\alpha(x) \geq y] \ \ \leq \ \  \Pr [F_{\mu, \sigma, n} \geq y]
\end{equation} 

for some $\mu := \mu(\alpha)$ and $\sigma := \sigma(\alpha)$.  Here $F_{\mu, \sigma, n}$ is as in Definition \ref{shifted-exp}.

Note that this would imply $\E_{x \in \{-1,1\}^n} [P^A_\alpha(x)] \leq \E[F_{\mu, \sigma, n}] = (\mu + \sigma) \sqrt{n}$.  We will show that for a suitable choice of $\alpha$,
the term $\mu + \sigma \leq 0$, and this suffices to prove the theorem.

It remains to prove Equation \ref{dominance}.  For the base case, $n=1$, we see that the equation is satisfied for $\mu \geq 1 - \alpha$, $\sigma > 0$.  We will now show that it is satisfied for $2n$ whenever
it is satisfied for $n$ (for appropriate $\mu$ and $\sigma$).

Now, for a sequence $x := (x_1, x_2) \in \{-1,1\}^{n} \times \{-1,1\}^{n}$, $P_\alpha^A(x) = \max(P_\alpha^A(x_1) + P_\alpha^A(x_2), \abs{h(x)} - \alpha \cdot \sqrt{2n})$.  So for every $x$ such that
$P_\alpha^A(x) \geq y$ we must have either $P_\alpha^A(x_1) + P_\alpha^A(x_2) \geq y$ or that $h(x) - \alpha\cdot\sqrt{2n} \geq y$.  Thus,
\begin{align} 
& \Pr_{x \in \{-1,1\}^{2n}} [P_\alpha^A(x) \geq y]  \\
\leq & \Pr_{x_1, x_2 \in \{-1,1\}^{n}} [P_\alpha^A(x_1) + P_\alpha^A(x_2) \geq y] +  \Pr_{x \in \{-1,1\}^{2n}} [h(x) - \alpha\cdot\sqrt{2n} \geq y] \\
\leq & \Pr [F_{\mu, \sigma, n} + F'_{\mu, \sigma, n} \geq y]  +  \Pr_{x \in \{-1,1\}^{2n}} [h(x) - \alpha\cdot\sqrt{2n} \geq y] \label{two-terms}
\end{align}
Here $F$ and $F'$ are independent random variables distributed as in Definition \ref{shifted-exp}.  We will show that the first and second term are each bounded by $\frac{1}{2} \Pr[F_{2n} \geq y]$ which is sufficient to
prove Equation \ref{dominance}.  Note that we only need to consider $y \geq \mu \sqrt{2n}$ since for smaller values of $y$ we have 
$$ \Pr_{x \in \{-1,1\}^{2n}} [P_\alpha^A(x) \geq y] \leq  \Pr[F_{2n} \geq y] = 1 $$

Henceforth, we will use shorthands $f_n := f_{\mu, \sigma, n}$ and $F_n := F_{\mu, \sigma, n}$.

The first term can be written as:-
\begin{align*}
\Pr [F_n + F'_n \geq y] = & \int_{y}^\infty \int_{-\infty}^{\infty} f_n(s) \cdot f_n(w - s) \, \mathrm{d}s \, \mathrm{d}w \\
= & \int_{y}^\infty \int_{\mu \sqrt{n}}^{w - \mu \sqrt{n}} f_n(s) \cdot f_n(w - s) \, \mathrm{d}s \, \mathrm{d}w
\end{align*}
where  the second equation follows from the fact that $f_n(s) = 0$ for $s < \mu \sqrt{n}$ and $f_n(w - s) = 0$ for $s > w - \mu \sqrt{n}$.  
Thus, we need to show for all $ y \geq \mu \sqrt{2n}$:-
\begin{align*}
& \int_{y}^\infty \int_{\mu \sqrt{n}}^{w - \mu \sqrt{n}} f_n(s) \cdot f_n(w - s) \, \mathrm{d}s \, \mathrm{d}w  \leq \frac{1}{2} \Pr[F_{2n} \geq y] \\
\Longleftarrow & \frac{1}{\sigma^2 n} \int_{y}^\infty \int_{\mu \sqrt{n}}^{w - \mu \sqrt{n}} \exp\left(- \frac{s - \mu \sqrt{n}}{\sigma \sqrt{n}} \right) \cdot \exp\left(- \frac{w - s - \mu \sqrt{n}}{\sigma \sqrt{n}} \right) \, \mathrm{d}s \, \mathrm{d}w  \\
& \leq \frac{1}{2} \exp\left(- \frac{y - \mu \sqrt{2n}}{\sigma \sqrt{2n}} \right) \\
\Longleftarrow & \frac{1}{\sigma^2 n} \int_{y}^\infty \int_{\mu \sqrt{n}}^{w - \mu \sqrt{n}} \exp\left(- \frac{w - 2 \mu \sqrt{n}}{\sigma \sqrt{n}} \right) \, \mathrm{d}s  \, \mathrm{d}w \\
 & \leq \frac{1}{2} \exp\left(- \frac{y - \mu \sqrt{2n}}{\sigma \sqrt{2n}} \right) \\
\Longleftarrow & \frac{1}{\sigma^2 n} \int_{y}^\infty (w - 2 \mu \sqrt{n}) \exp\left(- \frac{w - 2 \mu \sqrt{n}}{\sigma \sqrt{n}} \right)  \, \mathrm{d}w \\
 & \leq \frac{1}{2} \exp\left(- \frac{y - \mu \sqrt{2n}}{\sigma \sqrt{2n}} \right) \\
\end{align*}
In the third line we implicitly assume that $y \geq 2 \mu \sqrt{n}$, since otherwise the left hand side is less than $0$ and the equation is satisfied.

Note that the integral is of the form $\int u \cdot e^{-cu}$ which integrates to $- \left( \frac{u + 1/c}{c} \right) \cdot e^{-cu} $.  Thus, integrating and 
substituting $z := y - 2 \mu \sqrt{n}$ we need to show for all $z \geq 0$,
\begin{align*} 
 & \frac{1}{\sigma \sqrt{n}} \cdot (z + \sigma \sqrt{n}) \cdot \exp\left(- \frac{z}{\sigma \sqrt{n}} \right) &  \leq & \frac{1}{2} \exp\left(- \frac{z + (\sqrt{2} - 1) \mu \sqrt{2n}}{\sigma \sqrt{2n}} \right) \\
 \Longleftarrow & \frac{2z}{\sigma \sqrt{n}} + 2 & \leq & \exp\left(\frac{z}{\sigma \sqrt{n}} - \frac{z + (\sqrt{2} - 1) \mu \sqrt{2n}}{\sigma \sqrt{2n}} \right) \\
 \Longleftarrow & \frac{2z}{\sigma \sqrt{n}} + 2 & \leq & \exp\left(\frac{(\sqrt{2} - 1) z}{\sigma \sqrt{2n}} \right) \cdot \exp \left( (\sqrt{2} - 1) \frac{ - \mu}{\sigma } \right) \\
\end{align*} 
Substituting $w := \frac{z}{\sigma \sqrt{n}}$, we need for all $w \geq 0$,
\begin{align*}
& 2w + 2  & \leq & \exp\left(\frac{(\sqrt{2} - 1) w}{\sqrt{2}} \right) \cdot \exp \left( (\sqrt{2} - 1) \frac{ - \mu}{\sigma } \right) \\
 \Longleftarrow & \frac{2w + 2}{\exp\left(\frac{(\sqrt{2} - 1) w}{\sqrt{2}} \right)} & \leq & \exp \left( (\sqrt{2} - 1) \frac{ - \mu}{\sigma } \right) \\
\end{align*}
The left hand side is maximized at $w = 1/\sqrt{2}$ and the value of left hand side at that point is around $2.78$.  Thus, if $(-\mu/\sigma) \geq 2.47$ then the equation is always
satisfied.

We now turn to bounding the second term in Equation \ref{two-terms}.  We need to show for all $y \geq \mu \sqrt{2n}$,
\begin{align*}
& \Pr_{x \in \{-1,1\}^{2n}} [\abs{x} - \alpha\cdot\sqrt{2n} \geq y] \leq \frac{1}{2} \Pr[F_{2n} \geq y] \\
\Longleftarrow & \Pr[|B_{2n}| \geq y + \alpha \cdot \sqrt{2n}] \leq \frac{1}{2} \Pr[F_{2n} \geq y] \\
\Longleftarrow & \Pr[|B_{2n}| \geq (z + \alpha) \cdot \sqrt{2n}] \leq \frac{1}{2} \Pr[F_{2n} \geq z \cdot \sqrt{2n}] \\
\Longleftarrow & 2 \cdot \exp\left( - \frac{(z + \alpha)^2}{2} \right) \leq \frac{1}{2} \Pr[F_{2n} \geq z \cdot \sqrt{2n}] \\
\end{align*}
where the last line follows from Theorem \ref{hoeffding}, and in the second last line we substitute  $z := y/\sqrt{2n}$.


Thus, we need to show for all $z \geq \mu$,

\begin{align*}
& 4 \cdot \exp\left( - \frac{(z + \alpha)^2}{2} \right) & \leq \exp\left(- \frac{z \sqrt{2n} - \mu \sqrt{2n}}{\sigma \sqrt{2n}} \right) \\
\end{align*}

Substituting $w := z - \mu$, we need to show for all $w \geq 0$,

\begin{align*}
 &  \exp\left(- \frac{(w + \mu + \alpha)^2}{2} + \frac{w}{\sigma} \right)  & \leq 0.25 \\
 \Longleftarrow &  - \frac{(w + \mu + \alpha)^2}{2} + \frac{w}{\sigma}   & \leq -1.4 \\
\end{align*}

The left hand side is maximized at $w + \mu + \alpha = 1/\sigma$ and for that value of $w$ the inequality is given by

\begin{align*}
 & \frac{-1}{2 \sigma^2} + \frac{1/\sigma - \mu - \alpha}{\sigma} \leq -1.4 \Longleftarrow \mu + \alpha \geq 1.4 \sigma + \frac{0.5}{\sigma}
\end{align*}

Also, recall that to bound the first term we needed $-\frac{\mu}{\alpha} \geq 2.47$.  Let's set $\mu := -2.47 \sigma$.  Then we need 

$$ \alpha \geq (1.4 + 2.47) \sigma + \frac{0.5}{\sigma} = 3.87 \sigma + \frac{0.5}{\sigma} $$

The right hand side is minimized at $\sigma = \frac{1}{\sqrt{2 \cdot 3.87}} \approx 0.36$, and substituting we get that $\alpha = 2.8$ is feasible.  Recall that we also needed 
$\mu + \alpha \geq 1$ from the base case which is already satisfied for this choice of parameters. 

\end{Proof}

\section{Algorithm and running time}

\begin{theorem} \label{algorithm}
The value of $P_\alpha(S)$ for a sequence $S$ of length $T$ can be computed by a dynamic program (DP) in time $O(T^3)$.
\end{theorem}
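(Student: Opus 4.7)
The plan is to set up a one-dimensional dynamic program on prefixes of $S$. Let $S = b_1, b_2, \ldots, b_T$ and define
\[
f(j) \ := \ P_\alpha(b_1, \ldots, b_j) \qquad \text{for } 0 \le j \le T,
\]
with the convention $f(0) = 0$. The key structural observation is that any partition of the prefix $S[1..j]$ into intervals is determined by choosing the last interval $[i{+}1, j]$ for some $0 \le i < j$ and then optimally partitioning the remaining prefix $S[1..i]$. This gives the recurrence
\[
f(j) \ = \ \max_{0 \le i < j} \left( f(i) \ + \ \bigl| h(b_{i+1}, \ldots, b_j) \bigr| \ - \ \alpha \sqrt{\,j - i\,} \right),
\]
and by Definition \ref{interval-payoff}, $P_\alpha(S) = f(T)$. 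Correctness follows by induction on $j$: any optimal partition of $S[1..j]$ can be split at its final breakpoint, and its prefix contribution is bounded by $f(i)$ while its last-interval contribution is exactly $|h(b_{i+1}, \ldots, b_j)| - \alpha\sqrt{j-i}$.

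To make the recurrence efficient, I would first precompute the array of partial sums $H[j] := \sum_{t=1}^{j} b_t$ in $O(T)$ time, so that $h(b_{i+1}, \ldots, b_j) = H[j] - H[i]$ and thus $|h(\cdot)|$ is evaluated in constant time per pair $(i,j)$. Similarly, I would tabulate $\sqrt{\ell}$ for each $\ell \in \{1, \ldots, T\}$ in $O(T)$ time. With these tables in hand, each evaluation of the bracket in the recurrence costs $O(1)$, so computing $f(j)$ costs $O(j)$ time, and the full table $f(0), f(1), \ldots, f(T)$ is computed in
\[
\sum_{j=1}^{T} O(j) \ = \ O(T^2)
\]
time, which is trivially within the claimed $O(T^3)$ bound.

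There is no real obstacle in this proof: the recurrence is a direct unrolling of the definition, and the only potential inefficiency, recomputing interval sums from scratch, is eliminated by the prefix-sum precomputation. The only thing to verify carefully is that the base case $f(0)=0$ together with the $\max$ over $0 \le i < j$ correctly captures all partitions, including the trivial one-interval partition (obtained by taking $i=0$), which contributes $|H[j]| - \alpha\sqrt{j}$. This completes the plan for the proof of Theorem \ref{algorithm}.
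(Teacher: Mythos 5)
Your proof is correct, but it takes a genuinely different route from the paper's. The paper builds a two-dimensional table indexed by all subintervals $(i,j)$, computing $P_\alpha(S_{ij})$ as the maximum of the single-interval payoff $\abs{h(S_{ij})} - \alpha\sqrt{j-i+1}$ and the best split $P_\alpha(S_{ik}) + P_\alpha(S_{kj})$ over interior points $k$; with $O(T^2)$ entries and an $O(T)$ inner maximization this gives the stated $O(T^3)$. You instead run a one-dimensional DP over prefixes, conditioning on the \emph{last} interval of the partition, which exploits the fact that the objective is additive over the blocks of a partition of the whole sequence rather than requiring optimal values on arbitrary subintervals. Your recurrence $f(j) = \max_{0 \le i < j}\left( f(i) + \abs{H[j]-H[i]} - \alpha\sqrt{j-i} \right)$ is a valid unrolling of Definition \ref{interval-payoff} (the $i=0$ case covers the one-block partition, and $f(0)=0$ is the right base case), and it runs in $O(T^2)$, strictly better than the claimed bound. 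What the paper's more expensive formulation buys is the full table of $P_\alpha$ values on every subinterval, which is reused in the preprocessing for Theorem \ref{Compute} (e.g., $P_\alpha$ of every suffix of the random completions); your prefix DP yields only $P_\alpha$ of prefixes of $S$, which is all that Theorem \ref{algorithm} itself requires.
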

\begin{proof}
We give a simple $O(T^2)$ space and $O(T^3)$ time algorithm.  

For every subinterval $(i,j)$ of the sequence, $i, j \in [T]$ the DP table stores $P_\alpha(S_{ij})$ where $S_{ij}$ is the subsequence of $S$
containing bits from position $i$ to position $j$, inclusive.  For $i=j$, this value is always $1 - \alpha$.  For $j > i$, to compute the value of $P_\alpha(S_{ij})$, we need to take the maximum over two quantities.  The first quantity is $\abs{h(S_{ij})} - \alpha \cdot \sqrt{j - i + 1}$ which
corresponds to splitting the subsequence into a single interval.  This can be readily computed in constant time if we pre-compute the height of every subsequence, which can be done in
$O(T^2)$ space and time.  The second quantity is the maximum over all $k \in \{i, i+1, \ldots, j\}$ of $P_\alpha(S_{ik}) + P_\alpha(S_{kj})$.  This corresponds to splitting the subsequence at $k$ and then recursively
computing the best payoff in each of the two intervals created.
This quantity can be computed in time $j - i + 1$ since for each $k$ we just need to read off the appropriate values ($P_\alpha(S_{ik})$ and $P_\alpha(S_{kj})$) from the DP table.  
\end{proof}

\begin{theorem} \label{Compute}
There is a randomized algorithm that achieves the payoff guarantee $P_\alpha$ of the main theorem in expectation and spends $O(T^2)$ time per step.
There is also a randomized algorithm that achieves payoff $P_{\alpha'}$ with $\alpha' = c \alpha$ and spends only
$O(\log T)$ time per step.  Here $c := \frac{\sqrt{2}}{\sqrt{2} - 1}$. 

Both algorithms above use pre-computed information that takes $O(T^2)$ space and is computed in $O(T^4)$ time.
\end{theorem}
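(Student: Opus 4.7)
The plan is to realize Cover's prediction rule (Observation \ref{cover}) with a single random completion drawn once at the start, using the ``prefix'' variant of the $P_\alpha$ DP of Theorem \ref{algorithm} for the $O(T^2)$-per-step algorithm and an incrementally maintained binary-tree DP over aligned intervals for the $O(\log T)$-per-step algorithm. Sample $R \in \{-1,1\}^T$ uniformly. At step $t$, with observed prefix $s = b_1\cdots b_{t-1}$ and completion $U = R_{t+1}\cdots R_T$, predict $\tilde b_t = \tfrac{1}{2}\bigl(f(s\cdot(+1)\cdot U) - f(s\cdot(-1)\cdot U)\bigr)$, where $f \in \{P_\alpha, P^A_\alpha\}$ is the target payoff function. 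Using the potential $\Phi_t(s) := \E_U[f(s\cdot U)]$, a telescoping identity (exactly the one justifying Observation \ref{cover}) yields $\E[A_T] = f(X) - \E[f(R)]$, and since $\E[P_\alpha(R)] \le 0$ (Theorem \ref{main}) and $\E[P^A_\alpha(R)] \le 0$ (Theorem \ref{main-part-2}), $\E[A_T] \ge f(X)$.

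For the $O(T^2)$-per-step algorithm, set $f = P_\alpha$ and reformulate the DP of Theorem \ref{algorithm} as a one-dimensional prefix DP: with $Q_k := P_\alpha(S_{1,k})$, $Q_0 = 0$ and
\[
Q_k = \max_{0 \le j < k} \bigl(Q_j + |h(S_{j+1,k})| - \alpha\sqrt{k-j}\bigr),
\]
so that $P_\alpha(S) = Q_T$ is computed in $O(T^2)$ total time (each of $T$ entries in $O(T)$) given an $O(T)$ prefix-sum array for the heights. Running this DP twice per step, once for each sign of $b_t$, costs $O(T^2)$ per step. The $O(T^2)$-space, $O(T^4)$-time precomputation budget subsumes the initial DP on $R$, the prefix-sum arrays, and auxiliary tables shared with the second algorithm.

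For the $O(\log T)$-per-step algorithm, set $f = P^A_\alpha$. The aligned DP (Definition \ref{align-def}) has only $O(T)$ entries arranged as a complete binary tree over $[1,T]$: each internal node $v$ covering aligned interval $X_v$ stores $D_v = \max\bigl(|h(X_v)| - \alpha\sqrt{|X_v|},\ D_{\mathrm{left}(v)} + D_{\mathrm{right}(v)}\bigr)$, with $P^A_\alpha(X)$ equal to the root value. Initialize the tree on $R$ in $O(T)$ time. A bit flip at position $k$ changes only the $O(\log T)$ ancestors of leaf $k$, each updated in $O(1)$ using path-maintained subtree heights, so the three flips per step (two $\pm 1$ probes and one commit of $b_t$) cost $O(\log T)$. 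This gives expected payoff $\ge P^A_\alpha(X)$. To upgrade this to $P_{c\alpha}(X) = P_{\alpha'}(X)$ I will prove the pointwise inequality $P^A_\alpha(X) \ge P_{c\alpha}(X)$, which is the content underlying Lemma \ref{align-lem}: for any general partition $\{X_i\}$ of $X$, decompose each $X_i$ of length $x_i$ into a dyadic cover $\{Y_{ij}\}$ of aligned subintervals whose lengths shrink by factor $2$ outward, so that $\sum_j \sqrt{|Y_{ij}|} \le c\sqrt{x_i}$ by a geometric series with ratio $1/\sqrt{2}$; combined with $|h(X_i)| \le \sum_j |h(Y_{ij})|$, summing over $i$ yields $P_{c\alpha}(X) \le P^A_\alpha(X)$.

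The main technical step will be the dyadic decomposition underlying the pointwise inequality $P^A_\alpha(X) \ge P_{c\alpha}(X)$, which is what drives the factor $c = \sqrt{2}/(\sqrt{2}-1)$ in the second algorithm's guarantee. The prefix DP and the incremental tree DP are both straightforward once set up, so the combinatorial bound is where the real work lies; everything else reduces to careful bookkeeping of Cover's prediction rule and its cost per step.
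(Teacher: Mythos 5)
Your proposal is correct and achieves all the claimed bounds, but its implementation differs from the paper's in two ways worth noting. First, you reuse a single random completion $R$ across all steps, whereas the paper draws a fresh independent $U_t$ at every step and therefore must precompute, for each $t$, the heights of all prefixes and the $P_\alpha$-values of all suffixes of $U_t$ (this is where its $O(T^4)$ precomputation goes), maximizing at query time over pairs $(i,j)$ straddling position $t+1$. Your single-$R$ choice is what powers your $O(\log T)$ algorithm: only $O(1)$ leaves change per step, so an incrementally maintained binary tree over aligned intervals (storing $h(X_v)$ and $D_v$ at each node) suffices, whereas the paper instead argues via the $O(\log T)$ aligned intervals spanning a fixed position against its precomputed tables. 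Both are valid, since the expectation argument behind Observation \ref{cover} only needs each completion to be marginally uniform, not independent across steps (for an obliviously fixed sequence, which is the setting of the payoff guarantee). Second, your $O(T^2)$-per-step algorithm simply reruns a one-dimensional prefix DP from scratch on the completed sequence; this is simpler than the paper's precompute-and-combine scheme and fits comfortably inside the stated precomputation budget. The reduction from $P_{c\alpha}$ to $P^A_\alpha$ via the dyadic decomposition is exactly the paper's Lemma \ref{align-lemma} stated pointwise, with the same geometric-series bound $\sum_j \sqrt{|Y_{ij}|} \le c\sqrt{|X_i|}$. One small point to make explicit: with a single sampled completion, ensuring $\bt_t \in [-1,1]$ requires the pointwise bound $|f(s.1.U) - f(s.(-1).U)| \le 2$ for every $U$ rather than only on average; this does hold for $P_\alpha$ and $P^A_\alpha$ because flipping one bit changes the value of each candidate partition by at most $2$.
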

\begin{proof}

Let $X \in \{-1, 1\}^T$ be the input sequence we are required to predict.  Using Observation \ref{cover}, it is easy to see that the following algorithm achieves payoff $P_\alpha(X)$ in expectation.  For every $t \in \{0, 1, \ldots, T-1\}$:
\begin{enumerate}
\item Let $s \in \{-1,1\}^t$ be the sequence of bits seen so far.
\item Let $U_t$ be a sequence drawn uniformly at random from $\{-1,1\}^{T - t - 1}$ (independently for each $t$).  Let $s_1 := s \cdot 1 \cdot U$ and $s_{-1} := s \cdot (-1) \cdot U$.
\item Make the prediction $\bt := (P_\alpha(s_1) - P_\alpha(s_{-1})/2$ for the next bit.
\end{enumerate}

The key idea is that we will draw the random sequences $U_t$ in advance and pre-compute enough information to make the prediction as fast as possible.  For each $t \in \{0, 1, \ldots, T-1\}$ we pre-compute the following information for each $U_t$:-
\begin{enumerate}
\item $h(U^1_t)$ for every prefix $U^1_t$ of $U_t$
\item $P_\alpha(U^2_t)$ for every suffix $U^2_t$ of $U_t$
\end{enumerate}

The pre-computation takes $O(T^3)$ time for each $t$ and hence $O(T^4)$ time overall.

Let's describe how to use this pre-computed information to compute $P_\alpha(s_1)$ at time $t$ (the computation of $P_\alpha(s_{-1})$ is similar).  Let $1 \leq i \leq t$ and $t + 2 \leq j \leq T$.  Then it is easy to check
that 

$$ P_\alpha = \max_{i, j} \left[ P_\alpha(s_{1i}) + P_\alpha(U_{jT}) + \abs{h(s_{(i+1)t})} + \abs{h(U_{(t+1)(j-1)})} - \alpha \cdot \sqrt{j - i - 1} \right] $$

Here for a sequence $S$, $S_{ij}$ is the subsequence of $S$ containing bits from position $i$ to position $j$, inclusive.  Note that we think of $U_t$ as being indexed from $t+1$ to $T$ where the $(t+1)^{th}$ bit is $1$ (since we are dealing with $s_1$).  The second and fourth term are part of our pre-computation.  The first and third terms can be computed on the fly and stored in the table as we increase $t$ from $1$ to $T$.  Thus, for each $i$ and $j$ we can compute this expression in constant time and hence we can produce a prediction in $O(T^2)$ time per step.

The second part of the theorem is proved in a similar manner by using only aligned intervals for splitting the sequence (Definition \ref{align-def}) and observing that the number of aligned intervals spanning
a given position is at most $O(\log T)$.  The algorithm achieves payoff at least $P_{\alpha'}$ because of Lemma \ref{align-lem}.

\end{proof}

\bibliography{regret}
\bibliographystyle{splncs}

\appendix

\section{Experimental results} \label{experiment}

In this section we describe our experimental setup and findings.  

The first part of the experiment is to experimentally estimate the value of $\alpha_0$.  In general we may think of $\alpha_0$ as a function of $T$.  In Section \ref{proof} we saw that 
$\alpha_0(T)$ is bounded from above by an absolute constant for all $T$.  In Section \ref{alpha} below we estimate the values of $\alpha_0$ for a range of $T$.

The second part of the experiment is to implement our algorithm and compare its performance against $3$ other prediction algorithms.  This is described in Section \ref{predict} below.

\subsection{Computation of $\alpha_0$} \label{alpha}

We denote by $\alpha_0(T)$ the minimum value of $\alpha$ such that the payoff function $P_\alpha$ is feasible for sequences of length $T$.  For a particular $T$, this value can be computed using Theorem \ref{compute}.  While Theorem \ref{compute} requires us to compute the payoff function over all sequences of length $T$ (to compute the expectation), we can
experimentally approximate this by taking sufficiently many random sequences of length $T$ and looking at the expectation of the sample.  We are interested in $T=389$ which is the number of minutes in a trading day for which we have returns data
(there are $390$ minutes in a typical trading day and the returns for the first minute is undefined).  

%

Note that the standard error of the sample mean is obtained as the sample standard deviation divided by $\sqrt{n}$ where $n = 400$ is the number of trials.  The following chart shows the mean payoff and standard error for various values of
$\alpha$ for $T = 389$.

\includegraphics[scale=0.45]{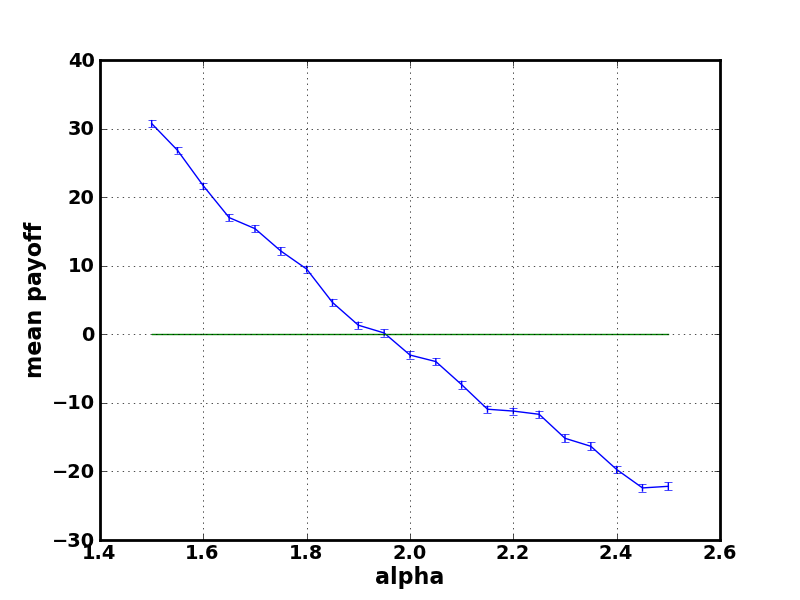}

From the figure we see that $\alpha = 1.96$ is a good estimate for $\alpha_0 (T)$ for $T = 389$.  The figure below shows estimated values of $\alpha_0$ for various $T$.

\includegraphics[scale=0.45]{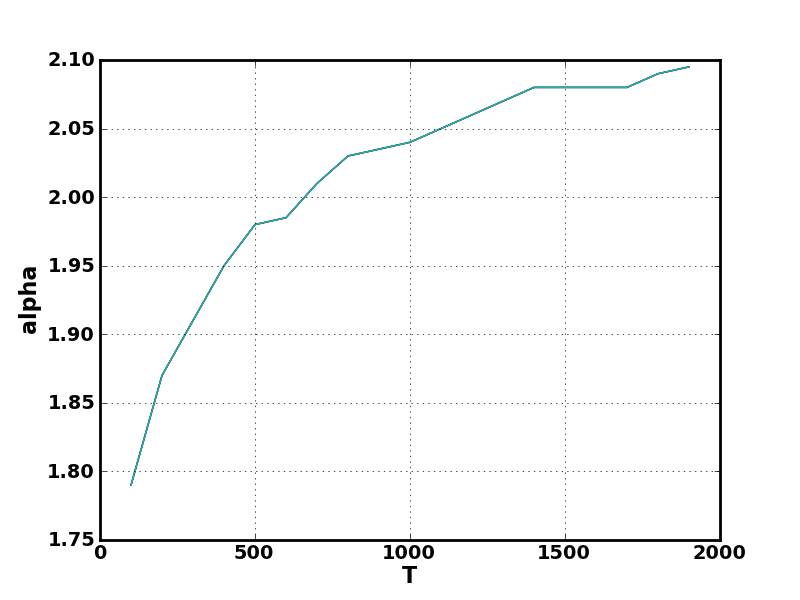}

\subsection{Comparison of predictive performance} \label{predict}

The algorithms we consider are:-
\begin{enumerate}
\item The baseline buy and hold strategy that achieves payoff equal to the height (height)
\item The algorithm described in this paper (interval)
\item Weighted Majority algorithm (WM)
\item The algorithm of \cite{KP11} (Algorithm 4, section 5) (boundedloss)
\item An algorithm based on Auto Regressive Integrated Moving Average (arima)

\end{enumerate}
Note that algorithms $2$-$4$ are based on ideas from regret minimization with provable guarantees while the fifth is a commonly used model for predicting time series data.  To implement the fourth algorithm we
use the function {\sc auto.arima()} in {\sc R} which is part of the library {\sc forecast}.

The prediction task we consider is to predict the next minute returns for a stock over a single trading day using only the previous $1$ minute returns of the given stock for the given day.  More precisely, we define the 
price of a stock at a given time taking the average of the best bid price and best ask price at that time as reported by the New York Stock Exchange (NYSE).
We perform this prediction experiment over
$189$ days for the following $5$ US stocks/ETFs from various sectors: {\sc MSFT}, {\sc GE}, {\sc GLD}, {\sc QQQ} and {\sc WMT}.  This gives us performance data for each algorithm for a total of $389 \times 189 \times 5 = 367,605$ data points.
The results obtained are shown in the figure below.

\includegraphics[scale=0.35]{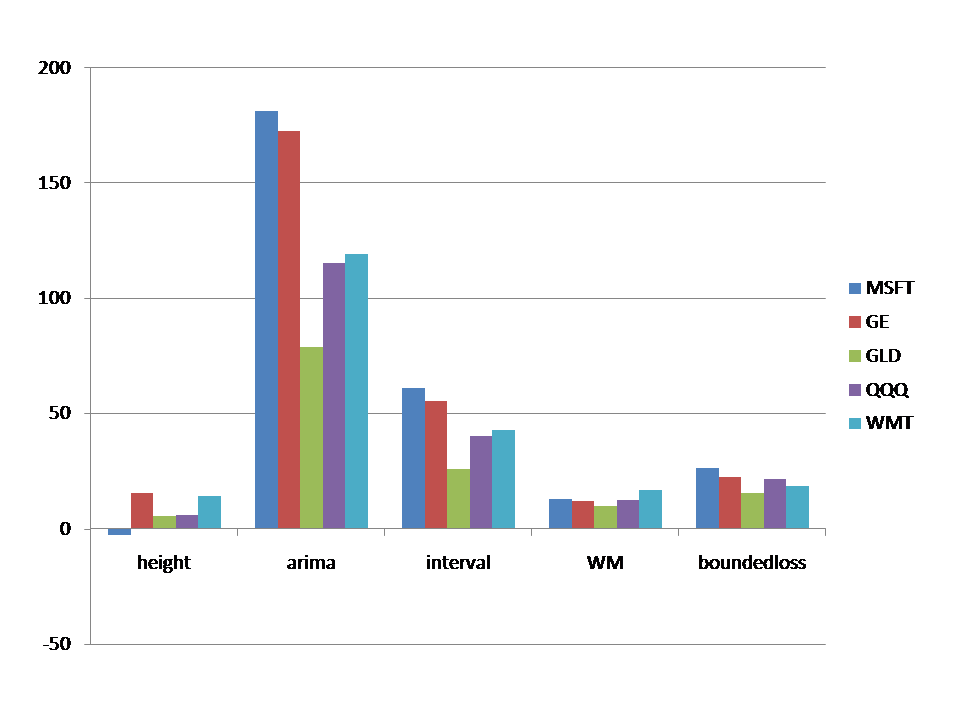}

We note that while our algorithm performs better in practice than other regret minimization based prediction algorithms with provable guarantees, it is outperformed by the ARIMA model.

\section{Omitted Proofs}

For $s$ a sequence of bits of length at most $T$, Let $R(s)$ denote a random string of length $T$ with prefix $s$; that is,  append a random  suffix to $s$ to make it of length $T$. Let $a.b$ denote the concatenation of $a$ and $b$. Let $f(D)$ denote the expected value of $f$ on a string drawn from $D$.  Let $[T] = \{1, 2, \ldots, T\}$.

\begin{observation} \label{impossible}

Let $A$ be an algorithm that guarantees a regret of at most  $c \cdot \sqrt{x}$ on an interval of length $x$ for all sequences.  Then there is a distribution $D$ over sequences of length $kx$ such that the expected
regret of $A$ on $D$ is at least $\Omega(k \cdot \sqrt{x})$.  Setting $k$ to be large enough, this implies that there is no prediction algorithm that can guarantee a regret of $O( \sqrt{|Y|})$ on all intervals $Y$ for all  input sequences.
\end{observation}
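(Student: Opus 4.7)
The plan is to take $D$ to be the uniform distribution on $\{-1,1\}^{kx}$, view each sample as $k$ contiguous blocks $X_1,\ldots,X_k$ of length $x$, and combine a short symmetry argument with the standard first-moment bound $\E[|B_x|] = \Theta(\sqrt{x})$ to lower-bound the expected sum of per-block regrets by $\Omega(k\sqrt{x})$.

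First I would establish that $\E_{S \sim D}[A_{X_i}] = 0$ for every block, where $A_{X_i}$ denotes the algorithm's payoff on $X_i$. The argument conditions on the prefix $(b_1,\ldots,b_{t-1})$: the prediction $\bt_t$ is a deterministic function of the prefix (and of the algorithm's independent random coins, if the algorithm is randomized), while $b_t$ is an independent uniform $\pm 1$ bit, so $\E[b_t\bt_t \mid b_1,\ldots,b_{t-1}] = \bt_t \cdot \E[b_t] = 0$. Summing over $t \in X_i$ and taking total expectation gives the claim. Next I would invoke $\E[|h(X_i)|] = \E[|B_x|] = \Theta(\sqrt{x})$, either via the classical asymptotic $\sqrt{2x/\pi}(1+o(1))$ from Stirling, or via a Paley--Zygmund bound using $\E[B_x^2]=x$ and $\E[B_x^4] \le 3x^2$.

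Linearity of expectation then yields
\[
\E_D\!\left[\sum_{i=1}^{k}\bigl(|h(X_i)| - A_{X_i}\bigr)\right] = \sum_{i=1}^{k}\E[|h(X_i)|] - \sum_{i=1}^{k}\E[A_{X_i}] = \Omega(k\sqrt{x}),
\]
which is the claimed lower bound on the expected total per-block regret. For the impossibility in the second sentence I would contrapose: if some algorithm $A$ guaranteed regret at most $f(|Y|)$ on every interval $Y$ for every sequence with $f(y) = o(\sqrt{y})$, then the deterministic per-block bound gives $\sum_i (|h(X_i)| - A_{X_i}) \le k f(x)$, so in expectation $k f(x) \ge \Omega(k\sqrt{x})$, forcing $f(x) = \Omega(\sqrt{x})$ and contradicting $f(y) = o(\sqrt{y})$; choosing $k$ (or $x$) large enough makes the contradiction concrete and yields the stated impossibility.

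The main (and essentially only) obstacle is making the symmetry step precise for randomized algorithms: one must be careful that the algorithm's internal coins are independent of the input bits, so that $\bt_t$ remains conditionally independent of $b_t$ given the prefix and the expectation factorises. Once that is stated cleanly, everything else is linearity of expectation together with a well-known first-moment bound on $|B_x|$, so there is no genuinely hard computation.
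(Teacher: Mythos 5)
There is a genuine gap: your argument proves a strictly weaker statement than the observation, for two separate reasons. First, the quantity you lower-bound is the \emph{sum of per-block regrets} $\sum_i\bigl(|h(X_i)| - A_{X_i}\bigr)$, not ``the regret of $A$ on $D$,'' which in the observation means the regret incurred on the single interval of length $kx$ (this is what the ``setting $k$ large enough'' step must be played against). On your choice of $D$ (the uniform distribution) that regret is only $\E[|h(X)|] - \E[A_T] = \Theta(\sqrt{kx})$, nowhere near $\Omega(k\sqrt{x})$, so the first sentence of the observation is not established by your $D$. Second, the first-moment bound cannot rule out a per-interval guarantee of $c\sqrt{|Y|}$ for a large constant $c$: since $\E[|B_x|] \to \sqrt{2/\pi}\,\sqrt{x}$, your chain $k f(x) \geq \E\bigl[\sum_i(|h(X_i)|-A_{X_i})\bigr]$ only forces $f(x) \gtrsim \sqrt{2/\pi}\,\sqrt{x}$; it refutes $f = o(\sqrt{y})$ (as you yourself state) or $f = c\sqrt{y}$ for small $c$, but the observation claims impossibility for \emph{every} constant $c$. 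Your ``choose $k$ or $x$ large'' step does nothing here, because $k$ cancels from the inequality.

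The missing idea is to condition on low-bias blocks so as to play the length-$x$ guarantee against the length-$kx$ guarantee. The paper takes $S_2$ to be the length-$x$ strings with $|h| \leq 2c\sqrt{x}$. Your (correct) symmetry fact $\E[A_{X_i}]=0$ on a uniform block, combined with the hypothesis that $A$ earns at least $|h| - c\sqrt{x} \geq c\sqrt{x}$ on every block with $|h| > 2c\sqrt{x}$ (an event of probability $e^{-O(c^2)}$), forces the conditional expected payoff on a block drawn from $S_2$ to be at most $-\Omega_c(\sqrt{x})$: the algorithm must hedge toward the bias to cover the high-bias blocks, and that hedging loses money on the low-bias ones. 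Taking $D$ to be the concatenation of $k$ i.i.d.\ blocks from $S_2$ and summing (the argument goes through block by block conditioned on any prefix) gives expected total payoff at most $-\Omega_c(k\sqrt{x})$, hence regret $\Omega_c(k\sqrt{x})$ on the full interval of length $kx$; a $c\sqrt{|Y|}$ guarantee on that one interval would cap this at $c\sqrt{kx}$, giving the contradiction once $k$ is large relative to $e^{O(c^2)}$. Your two ingredients ($\E[A_{X_i}]=0$ and $\E[|B_x|]=\Theta(\sqrt{x})$) are both correct and the former is reused in the paper's proof, but without conditioning on $S_2$ the argument cannot reach the stated conclusion.
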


\begin{proof}
Let $S_1$ be the sequences of length $x$ with absolute height more than $2c \sqrt{x}$ and $S_2$ be all other sequences of length $x$.  We know that the expected payoff of $A$ on a uniformly random sequence of length $x$ is $0$.  On the other hand, the payoff of $A$ on any sequence in $S_1$ is at least $c \cdot \sqrt{x}$.  A random string of length $x$ falls into $S_1$ with probability $e^{-\Omega(c^2)}$.  Thus, the expected payoff of $A$ on a random string chosen from $S_2$ is at most $-c \sqrt{x} e^{-\Omega(c^2)} = -\Omega(\sqrt{x})$.  

Consider the distribution $D$ over sequences of length $kx$ which is just the concatenation of $k$ random, independent sequences from $S_2$.  Then because $A$ has bounded regret in \emph{every} interval of length $x$, by the same argument as above we would get that the expected payoff of $A$ on $D$ is at most $- \Omega(k \cdot \sqrt{x})$ and hence the expected regret is at least $\Omega(k \cdot \sqrt{x})$.

\end{proof}

\begin{lemma} \label{align-lemma}
If $P_{\alpha}^A$ is feasible then $P_{c \alpha}$ is also feasible, where $c := \frac{\sqrt{2}}{\sqrt{2} - 1}$.
\end{lemma}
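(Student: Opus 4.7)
The plan is to use the very same algorithm $\mathcal{A}$ that achieves the aligned payoff guarantee $P_\alpha^A$ and argue that, without modification, it already realizes the arbitrary-partition guarantee $P_{c\alpha}$. The central observation is that every interval of length $x$ admits a decomposition into aligned sub-intervals whose lengths sum to $x$ but whose square roots sum to at most $c\sqrt{x}$.

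More concretely, let $\mathcal{A}$ achieve payoff at least $P_\alpha^A(X)$ on every sequence $X$, and fix any partition $X_1, \ldots, X_k$ of $X$. For each $X_i$, I would form its greedy aligned decomposition by iteratively removing the largest aligned interval that fits inside the remaining portion of $X_i$. This yields aligned pieces $Y_{i,1}, \ldots, Y_{i,m_i}$ whose concatenation (over both $i$ and $j$) is an aligned partition of $X$. Applying $\mathcal{A}$'s guarantee to this refined partition, the payoff of $\mathcal{A}$ on $X$ is at least $\sum_{i,j} \bigl( |h(Y_{i,j})| - \alpha \sqrt{|Y_{i,j}|} \bigr)$. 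The triangle inequality gives $\sum_j |h(Y_{i,j})| \ge |h(X_i)|$ for every $i$, so the whole argument reduces to the combinatorial bound $\sum_j \sqrt{|Y_{i,j}|} \le c\sqrt{|X_i|}$; plugging this in and then taking the maximum over partitions yields payoff at least $P_{c\alpha}(X)$.

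The main obstacle is this combinatorial inequality, and I would prove it by analyzing the structure of the greedy aligned decomposition of an interval of length $x$. A short induction on the 2-adic valuations of the successive starting positions shows that, as we sweep left to right, the piece lengths strictly increase (while the greedy choice is limited only by the 2-adic valuation of the start position) until they reach a maximum $M = 2^p$, and then strictly decrease (once the greedy choice becomes limited by the remaining space). At the transition the peak value $M$ may be attained by two adjacent pieces, but no more. Consequently, each power-of-two length appears in the decomposition at most twice.

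Given this structural fact, the inequality reduces to the discrete optimization $\max \sum_{i \ge 0} n_i \cdot 2^{i/2}$ over integers $n_i \in \{0, 1, 2\}$ subject to $\sum_i n_i \cdot 2^i = x$. Since the ``value density'' $2^{-i/2}$ is decreasing in $i$, the maximum is asymptotically achieved by setting $n_i = 2$ for all $i$ up to some $p$; then $x = 2(2^{p+1}-1)$ and the objective equals $\frac{2 (2^{(p+1)/2}-1)}{\sqrt{2}-1}$. A short geometric-series computation shows that the ratio of the objective to $\sqrt{x}$ tends to $\frac{\sqrt{2}}{\sqrt{2}-1} = c$ as $p \to \infty$, and a case analysis on the largest index $p$ with $n_p > 0$ (split according to whether $n_p = 1$ or $n_p = 2$) confirms that this supremum bounds the objective divided by $\sqrt{x}$ for every admissible $x$, delivering the required inequality.
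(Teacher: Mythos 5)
Your proposal is correct and follows essentially the same route as the paper: refine each interval of the adversary's partition into its greedy aligned decomposition, use $|h(X_i)| \le \sum_j |h(Y_{i,j})|$, and reduce everything to the combinatorial bound $\sum_j \sqrt{|Y_{i,j}|} \le c\sqrt{|X_i|}$, which both you and the paper extract from the fact that each power-of-two length occurs at most twice in that decomposition. The only real difference is how the final inequality is closed: where you optimize $\sum_i n_i 2^{i/2}$ subject to $n_i \in \{0,1,2\}$ and $\sum_i n_i 2^i = x$ (deferring a case analysis that does require some care, since the crude bound $x \ge 2^p$ alone is not tight enough), the paper groups the pieces into two families of pairwise-distinct sizes, bounds each family by $\frac{1}{\sqrt{2}-1}$ times the square root of its total length via a geometric series, and finishes with $\sqrt{a}+\sqrt{b} \le \sqrt{2}\sqrt{a+b}$ --- a slightly cleaner way to reach the same constant $c=\frac{\sqrt{2}}{\sqrt{2}-1}$.
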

\begin{Proof}

Let $X_1, X_2, \ldots, X_k$ or a given sequence $S$.  We split each interval $X_i$ into a disjoint union of aligned intervals $Y_{i1}, \ldots, T_{il}$.  We will then show that the identity 
$$ \sum_{j = 1}^l \sqrt{\abs{Y_{ij}}}  \leq c \cdot \sqrt{\abs{X_i}}$$
always holds where $\abs{I}$ denotes the length of the interval $I$.  This suffices to prove the theorem since $h(X_i) \leq \sum_{j = 1}^l h(Y_{ij})$.

 For notational simplicity, let $I = X_i$ and $x = \abs{I}$.  If $I$ is an aligned interval we are done, otherwise we write it as the minimal union of aligned intervals 
(take out the largest aligned interval in $I$ and repeat).  There are three possibilities:-
\begin{enumerate}
  \item $I = I_1 \cup I_2$ is a union of two intervals of size $x/2$ each (eg. the interval $[T/4 + 1,\ 3T/4]$) \label{first}
  \item $I = I_1 \cup I_2 \cup \ldots \cup I_l$, where each $I_j$ is of a different size.  Note that all interval sizes on the right are powers of $2$ and strictly less than $x$ \label{second}
  \item $I = J \cup J'$ where each $J$ can be written as a union of intervals as in \ref{first} or \ref{second} above
\end{enumerate}

In the first case, 

$$ \sqrt{\abs{I_1}} + \sqrt{\abs{I_2}}  \leq 2 \cdot \sqrt{x/2} = \sqrt{2} \cdot \sqrt{x} $$

In the second case,
$$  \sum_{j = 1}^l \sqrt{\abs{I_j}} \leq \sqrt{x} \cdot \sum_{j=1}^\infty \sqrt{1/2^j} =  \frac{1}{\sqrt{2} - 1}  \cdot \sqrt{x} $$

In the third case, 

$$  \sqrt{\abs{J}} +\sqrt{\abs{J'}} \leq \frac{1}{\sqrt{2} - 1}  \cdot \sqrt{|J|} \ + $$
$$ \frac{1}{\sqrt{2} - 1}  \cdot \sqrt{|J'|} \leq \frac{\sqrt{2}}{\sqrt{2} - 1} \cdot \sqrt{x}$$
\end{Proof}

\subsection{Generalization to values of $b_t$ beyond $[-1,1]$}

In many applications the values $b_t$ may not be bounded in a range such as $[-1,1]$ but could have unbounded values, as in the case when they are drawn from a normal distribution. We will now extend our algorithm to such a case. We will show that our guarantees continue to hold in a semi adversarial setting where an adversary chooses the signs of $b_t$ but its magnitude is chosen from distribution with mean $1$.  Let $D$ denote a distribution over magnitude of real numbers with mean $1$ (and clearly with non-negative support). Let $s$ denote a sequence of bits (as signs) $b_t$. Let $M(s)$ denote a sequence of real numbers where each real number $m_t$ is obtained by multiplying $b_t$ with a randomly and independently drawn value from $D$.

Let $f$ denote a desired payoff function on a sequence of real numbers. We will show a sufficient condition to achieve on a sequence drawn from $M(s)$ an expected payoff of $f(M(s)) = E_{S \in M(s)}[f(S)]$. In the prediction algorithm, instead of appending random bits, we append a numbers with random signs but with magnitudes drawn from $D$. Given a sequence of real numbers $m$. Let $C(m)$ denote a random completion of $m$ to a sequence of length $T$ by appending numbers drawn randomly from $D$ and with a randomly chosen sign ($+1,-1$).

\begin{theorem} \label{real}
Given a payoff function  $f$ defined on a sequence of  real numbers, if $E_s[f(M(s))] \le 0$, then there is a prediction algorithm whose expected payoff on a string drawn from $M(s)$ is at least $f(M(s))$. This is obtained by betting  $\bt_t =  (f(C(m.[+1])) - f(C(m.[-1])))/ 2$, where $s$ is the sequence  seen so far. Note that $\bt_t \in [-1,1]$ as long as  for all  $s$, $|f(C(m.[+1])) - f(C(m.[-1]))| \le 2$
\end{theorem}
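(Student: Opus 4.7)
The plan is to generalize Observation \ref{cover} to the semi-adversarial real-valued setting, using only that the magnitudes $d_t$ are i.i.d.\ from $D$ with $E[d_t]=1$. I would introduce the potential $G(m) := E_{C(m)}[f(C(m))]$, so the hypothesis $E_s[f(M(s))] \le 0$ is exactly $G(\emptyset) \le 0$, and then define
\[
\Phi_t := \sum_{i=1}^t m_i \bt_i - G(m_1,\ldots,m_t).
\]
Note that $\Phi_0 = -G(\emptyset) \ge 0$ and $\Phi_T = (\text{algorithm's payoff}) - f(M(s))$, so it suffices to show that $\Phi_t$ has nonnegative expected increments, conditional on the adversarial sign $s_t$ and the past $m_1,\ldots,m_{t-1}$.

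For the increment computation I would fix an adversarial sign sequence $s$ (so $m_t = s_t d_t$), set $H_\pm(m) := E_{d \sim D}[G(m.(\pm d))]$, and use the one-step unfolding $G(m) = \tfrac12\bigl(H_+(m)+H_-(m)\bigr)$ that follows directly from the definition of $C$ as an i.i.d.\ completion of signs and magnitudes. Pulling the stray $d_t$ out of $m_t \bt_t = s_t d_t \bt_t$ via $E[d_t]=1$, the conditional expected increment of $\Phi_t$ reduces to $s_t\bigl(\bt_t - \tfrac12(H_+(m)-H_-(m))\bigr)$. This vanishes for both $s_t = +1$ and $s_t = -1$ precisely when $\bt_t = \tfrac12(H_+(m) - H_-(m))$, which I identify with the formula stated in the theorem after reading $f(C(m.[\pm 1]))$ as ``append a $D$-magnitude carrying the indicated sign, then complete randomly.'' The range constraint $\bt_t \in [-1,1]$ then follows immediately from the side condition $|H_+(m) - H_-(m)|\le 2$.

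The main technical content is thus the one-step unfolding $G(m) = \tfrac12(H_+(m)+H_-(m))$ together with the exchange that uses $E[d_t]=1$; once these are in hand, telescoping $E_d[\Phi_T]=\Phi_0$ yields the desired bound $E_d[\text{payoff}] \ge E_d[f(m)] = f(M(s))$. I expect the only real subtlety to be notational: the statement writes $m.[\pm 1]$ for the appended entry, while the martingale cancellation naturally averages the appended magnitude over $D$, so one must check that both readings give the same prediction rule in the intended application (and they coincide trivially when $D = \delta_1$, recovering Observation \ref{cover}).
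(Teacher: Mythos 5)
Your proof is correct and is essentially the paper's argument: the paper simply says ``as in Cover's proof'' and then asserts the equivalence of the two forms of $\bt_t$, whereas you actually carry out that Cover-style potential computation, using $E[d_t]=1$ and the one-step unfolding $G(m)=\tfrac{1}{2}\left(H_+(m)+H_-(m)\right)$ to show the increments of $\Phi_t$ have zero conditional expectation. The notational subtlety you flag about appending the literal $\pm 1$ versus averaging the appended magnitude over $D$ is real and is exactly the step the paper disposes of in one sentence (``in expectation is equal to \dots as $m$ is distributed as $M(s)$''), so your more careful treatment is a gain in rigor rather than a different route.
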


\begin{proof}
Let $s$ denote the sequence of signs seen so far. As in Covers proof we can show that setting $\bt_t = (f(M(R(s.[+1]))) - f(M(R(s.[-1]))))/ 2$ ensures that our expected  payoff at time $t$ is at least $f(M(R(s)))$. 

And also note that $(f(C(m.[+1])) - f(C(m.[-1])))/ 2$ in expectation is equal to $(f(M(R(s.[+1]))) - f(M(R(s.[-1]))))/ 2$ as $m$ is distributed as $M(s)$.
\end{proof}

\section{Trade-off with two experts}

Equivalence between the bit-prediction and two experts problem. The following equivalence
is shown in \cite{Andoni-Pmanuscript}. We redo the same proof here for the DP based solution.

In the above formulation we can define loss to be the maximum (-ve)
payoff. and we can obtain a tradeoff between regret $R$ and loss
$L$. This tradeoff is useful in obtaining a tradeoff on two different
regrets when there are two experts. In each round each expert has a
payoff in the range $[0,1]$ that is unknown to the algorithm. For two
experts, let $b_{1t}, b_{2t}$ denote the payoffs of the two
experts. The algorithm pulls the each arm (expert) with probability
$\bt_{1t}, \bt_{2t} \in [0,1]$ respectively where $\bt_{1t} + \bt_{2t}
= 1$.  The payoff of the algorithm is $A = \sum_{t=1}^T b_{1t}\bt_{1t}
+ b_{2t}\bt_{2t} $. Let $X_1 = \sum_{t=1}^T b_{1t}$ We will study the
regret trade-off $R_1, R_2$ with respect to these two experts which
means that $A \ge X_1 - R_1$ and $A \ge X_2 - R_2$.

One question that has been asked before is a tradeoff between regret
to the average and regret to the max \cite{kearns-regret,KP11}.  With
two experts, the regret/loss tradeoff in the sequence prediction
problem is related to regret trade-off for the two experts problem.
Let $R$, $L$ be feasible upper bounds on the regret and loss in the
sequence prediction problem in the worst case; Let $R_o, L_o$ be
feasible upper bounds on the regret and loss with version of the
sequence prediction problem with one sided bets (that is $\bt_t$
cannot be negative; the feasible payoff curves for this case is a
simple variant of $F_{c_1,c_2}$ where $F'$ is capped to lie in
$[0,1]$.) Let $R_1$, $R_2$ be feasible upper bounds in regret with
respect to expert one and expert two in the worst case. Let $R_m$,
$R_a$ be feasible upper bounds on the regret to the max and regret to
the average with two experts in the worst case.

\begin{lemma}[\cite{Andoni-Pmanuscript}]

Then $R,L$ is feasible in the sequence prediction problem if and only if $R_m= R/2, R_a= L/2$ is feasible for regret to the max and regret to the average in the two experts setting.

$R_o,L_o$ is  feasible in the sequence prediction problem  (with one sided bets) if and only if $R_1 = L_o, R_2 = R_o$ is feasible for regret to the first expert and regret to the second expert in the two experts setting.
\end{lemma}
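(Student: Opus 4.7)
The plan is to exhibit an explicit bijection between bit-prediction strategies (allowing the generalization to real-valued $b_t \in [-1,1]$ that the paper already establishes) and two-expert strategies, then read off each claimed equivalence from a single algebraic identity. Both reductions use the same ``differencing'' encoding of experts as bits; they differ only in how $\bt_t$ is mapped to $(\bt_{1t}, \bt_{2t})$.

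For the two-sided statement, given a two-expert instance with $b_{1t}, b_{2t} \in [0,1]$, set $b_t := b_{1t} - b_{2t} \in [-1,1]$ and use the symmetric prediction map $\bt_{1t} := (1+\bt_t)/2$, $\bt_{2t} := (1-\bt_t)/2$ with $\bt_t \in [-1,1]$. A short expansion yields
$$ A \;=\; \sum_t \bigl(b_{1t}\bt_{1t} + b_{2t}\bt_{2t}\bigr) \;=\; \frac{X_1+X_2}{2} + \frac{1}{2}\sum_t b_t \bt_t \;=\; \frac{X_1+X_2}{2} + \frac{P}{2}, $$
where $P$ is the bit-prediction payoff. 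Since $|X_1-X_2| = |\sum_t b_t|$, this gives $\max(X_1, X_2) - A = (|\sum_t b_t| - P)/2$ and $(X_1+X_2)/2 - A = -P/2$. Hence a bit-prediction regret bound $R$ corresponds exactly to the regret-to-max bound $R_m = R/2$, and a loss bound $L$ corresponds exactly to the regret-to-average bound $R_a = L/2$. The ``iff'' follows because the maps between strategies are invertible in both directions.

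For the one-sided statement, I would instead use the asymmetric prediction map $\bt_{1t} := \bt_t$, $\bt_{2t} := 1 - \bt_t$ with $\bt_t \in [0,1]$ and the standard encoding $b_{1t} = (1+b_t)/2$, $b_{2t} = (1-b_t)/2$ of bits. The analogous expansion gives $A = X_2 + P$, so $X_1 - A = h - P$ (with $h = \sum_t b_t$) and $X_2 - A = -P$. The loss bound $P \geq -L_o$ directly yields $X_2 - A \leq L_o$; for the other expert, a case split on the sign of $h$ (using $P \geq h - R_o$ when $h \geq 0$, which collapses $h - P$ to $\leq R_o$, and using $P \geq -R_o$ when $h < 0$, which forces $h - P \leq R_o$ since $h < 0$) yields $X_1 - A \leq R_o$. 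After the natural relabeling of which expert is called ``first'' versus ``second,'' this matches the claim $R_1 = L_o$, $R_2 = R_o$, and invertibility of the map gives the reverse direction.

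The step I expect to require the most care is the one-sided case: the best one-sided bit-prediction expert achieves $\max(h,0)$ rather than $|h|$, so the two individual regrets in the two-experts problem correspond to fundamentally asymmetric quantities (regret vs.\ loss) in bit prediction. The sign-based case analysis on $h$ and the nontrivial use of \emph{both} the regret guarantee and the loss guarantee to bound the single quantity $X_1 - A$ is what turns the identity $A = X_2 + P$ into the asymmetric correspondence $R_1 = L_o$, $R_2 = R_o$, in contrast to the fully symmetric two-sided case.
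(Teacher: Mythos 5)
Your proposal is correct and follows essentially the same route as the paper: the same differencing encoding $b_t = b_{1t}-b_{2t}$ with the affine maps $\bt_{1t}=(1\pm\bt_t)/2$ (resp.\ the asymmetric map $\bt_{1t}=\bt_t$, $\bt_{2t}=1-\bt_t$ in the one-sided case), and the same payoff identities $A = (X_1+X_2)/2 + P/2$ and $A = X_2 + P$ from which the stated correspondences between regret/loss and regret-to-max/average (resp.\ the two per-expert regrets) are read off. The only cosmetic differences are that you absorb the factor-of-two rescaling directly into the encoding rather than first obtaining a $[-1/2,1/2]$-valued sequence, you swap the labels of the two experts in the one-sided case, and for the converse of the one-sided reduction you use the right inverse $b_{1t},b_{2t}=(1\pm b_t)/2$ where the paper uses the split $b_{2t}=\max(b_t,0)$, $b_{1t}=\max(-b_t,0)$; none of these affects the argument.
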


\begin{proof}
First we look at reduction from the regret to the average and regret
to the max problem. We can reduce this problem to our sequence
prediction problem by producing at time $t$, $b_t = (b_{1t} -
b_{2t})/2$. A bet $\tilde b_t$ in our prediction problem can be
translated back probabilities $\bt_{1t} = (1+\bt_t)/2$ and
$(1-\bt_t)/2$ for the two experts. A payoff $A$ in the original
problem gets translated into payoff $\sum_t b_{1t} (1+\bt_t)/2 +
b_{2t} (1-\bt_t)/2 = (X_{1} + X_{2})/2 + A$ in the two experts case.
In this reduction the loss $L$ gets mapped to $R_a$ and the regret $R$
gets mapped to $R_m$. However note that $b_t$ is now in the range
$[0,1/2]$. Therefore we need to scale it by $2$ to reduce it to the
standard version of the original problem. Conversely, given an
sequence $b_t$ of the prediction problem we can convert it into two
experts with payoffs $b_{1t} = (1+b_t)/2, b_{2t} = (1-b_t)/2$. The
average expert has payoff $T/2$.  A payoff of $A$ in prediction
problem can be obtained from a sequence of arm pulling probabilities
with payoff $T/2 + A/2$ by interpreting the arm pulling probabilities
as $(1\pm \bt_t)/2$ since $\sum_t \frac{ (1+b_t)}{2}\frac{
  (1+\bt_t)}{2} + \frac{ (1-b_t)}{2}\frac{ (1-\bt_t)}{2} = T/2 + A/2$.

Next we look at regrets $R_1, R_2$ with respect to the two
experts. Given a sequence of payoffs to for the two experts we can
reduce it to a sequence for the (one sided ) prediction problem by
setting $b_t = b_{2t} - b_{1t}$. A bet $\tilde b_t$ in the prediction
problem can be translated to probabilities $\bt_{1t} = 1-\bt_t$ and
$\bt_{2t} = \bt_t$ for the two experts.  A payoff $A$ in the
prediction problem gets translated into payoff $\sum_t (1-\bt_t)
b_{1t} + \bt_t b_{2t} = X_{1} + A$ in the two experts case where a
zero regret in the prediction would correspond to $A = X_2 -
X_1$. Thus a loss of $L_o$ translates to a regret $R_1 = L_o$ with
respect to the first arm. And regret $R_o$ translates to regret $R_2 =
R_o$ with respect to the second arm. Thus if $R_o, L_o$ is feasible
then so is $R_1 = R_o, R_2 = L_o$. Conversely, given an instance of
the prediction problem with one sided bets, we can convert it to a
version of the two armed problem by setting $b_{2t} = b_t, b_{1t} = 0$
if $b_t \ge 0$ and $b_{2t} = 0, b_{1t} = -b_t$ otherwise.  A bet
$\tilde b_t$ is used in our original problem if the arms are pulled
with probabilities $1-\tilde b_t$ and $\tilde b_t$ respectively. The
payoff in the experts problem is $X_1 + \sum_t \tilde b_t (b_{2t} -
b_{1t})$. So regrets $R_1, R_2$ will translate to $L_o = R_1, R_o =
R_2$ in the prediction problem with one sided bets.
\end{proof}

\end{document}